\newtheorem{thrm}{Theorem}[section]
\newtheorem{lemma}[theorem]{Lemma}
\newtheorem{corollary}[theorem]{Corollary}
\newenvironment{proofsketch}{%
  \proof}{\endproof}
\theoremstyle{definition}
\newtheorem{definition}[theorem]{Definition}
\newtheorem{assumption}[theorem]{Assumption}
\newcommand{\indep}{\perp \!\!\! \perp}
\newcommand{\dep}{\not\perp \!\!\! \perp}
\global\long\def\pa{\mathrm{pa}}%
\global\long\def\nd{\mathrm{nd}}%
\global\long\def\argmax{\arg\!\max}%
\global\long\def\ci{\text{CI}}%
\begin{document}

\begin{frontmatter}

\title{Heteroscedastic Causal Structure Learning}
\author{\fnms{Bao}~\snm{Duong}\thanks{Corresponding Author. Email: \textit{duongng@deakin.edu.au}}}
\author{\fnms{Thin}~\snm{Nguyen}}

\address{Applied Artificial Intelligence Institute (A\textsuperscript{2}I\textsuperscript{2}), Deakin University, Australia}

\begin{abstract}

Heretofore, learning the directed acyclic graphs (DAGs) that encode
the cause-effect relationships embedded in observational data is a computationally challenging problem.
A recent trend of studies has shown that it is possible to recover
the DAGs with polynomial time complexity under the equal variances
assumption. However, this prohibits the heteroscedasticity of the
noise, which allows for more flexible modeling capabilities, but at
the same time is substantially more challenging to handle. In this
study, we tackle the heteroscedastic causal structure learning problem
under Gaussian noises. By exploiting the normality of the causal mechanisms,
we can recover a valid causal ordering, which can uniquely identifies
the causal DAG using a series of conditional independence tests. The
result is \textbf{HOST} (\textbf{H}eter\textbf{o}scedastic causal 
\textbf{ST}ructure learning), a simple yet effective causal structure
learning algorithm that scales polynomially in both sample size and
dimensionality. In addition, via extensive empirical evaluations on
a wide range of both controlled and real datasets, we show that the
proposed \textbf{HOST} method is competitive with state-of-the-art
approaches in both the causal order learning and structure learning
problems.
\end{abstract}
\end{frontmatter}

\section{Introduction\label{sec:Introduction}}

Causal structure learning provides the crucial knowledge to answer
not only the statistically descriptive questions of the systems of
interest, but also interventional and counterfactual queries. Thus,
it is at the heart of interest in many sciences where cause-effect
relationships are of utmost concerns, such as bioinformatics \cite{Sachs_etall_05Causal},
econometric \cite{Hunermund_Bareinboim_19Causal}, and neural sciences
\cite{Cao_etal_19Causal}. However, this problem poses serious challenges
when no prior knowledge is available, not to mention the limitation
of its practical solvability with respect to the scale of data.

Constraint-based \cite{Spirtes_Glymour_91Algorithm,Spirtes_etal_00Causation,Colombo_etal_12Learning}
and score-based methods \cite{Chickering_02Optimal,Ramsey_15Scaling,Ramsey_17Million,Zheng_etal_18DAGs,Zheng_etal_20Learning}
are conventionally two main approaches in causal structure learning.
The former branch leverages a series of statistical tests to eliminate
implausible connections between the variables of interest, but the
number of tests can grow exponentially with the number of variables.
Meanwhile, the latter group assigns dedicated scores to each candidate
causal graph based on data fitness, then performs greedy search or
continuous optimization for the optimal answer with respect to the
defined score, which is a NP-Hard problem since the space of possible
DAGs is super-exponentially large in the number of variables \cite{Robinson_77Counting}.
Both of these approaches cannot eliminate all improbable solutions
without further experimentations or prior knowledge, so the result
is a class of causal graphs that can induce the same observed data,
also known as the Markov Equivalent Class (MEC).

Recently, there is a novel line of works that concerns with additive
noise models where the noises have equal variances \cite{Buhlmann_etall_14Cam,Ghoshal_18Learning,Chen_etal_19Causal,Gao_etal_20Polynomial,Rolland_etal_22Score,Sanchez_etal_22Diffusion}.
This assumption allows for polynomial time algorithms that produces
unique graphs with provable guarantees in accuracy. Specifically,
they put more focus on finding the causal orderings of the causal
graph instead of the graph itself, since a correct causal ordering
can be used to efficiently trace back to the true causal graph \cite{Verma_Pearl_90Causal,Squires_Uhler_22Causal}. 

However, the equal variances assumption limits the modelling capabilities
in practice where the noise variances can fluctuate. Specifically,
here we exclusively focus on the heteroscedastic causal structure
learning problem, wherein the ``heteroscedascity'' reflects the fact
that the conditional variance of each variable given its causes is
non-constant and depends on the causes, in contrary with existing
additive noise models where it is constant across all variables in
the equal variances assumption aforementioned, or is constant for
each variable in the unequal variances models discussed in \cite{Buhlmann_etall_14Cam,Gao_etal_20Polynomial}.

While there have been progresses on this model \cite{Khemakhem_etal_21Causal,Xu_etal_22Inferring,Strobl_Lasko_22Identifying,Immer_etal_22Identifiability},
theoretical identifiability and methods are only specifically studied
for the case of two variables. As a result, they often propose to
extend to more than two variables by first recovering the skeleton
of the graph using conventional methods, then orient the undirected
edges with developed methods. This approach severely relies on the
quality of the skeleton recovery algorithm, which may scale non-polynomially
with dimensionality, while does not ensure acyclicity. A few ordering-based
approaches can address the heteroscedasticity, but under very restrictive
conditions, e.g., constant expected noise variances \cite{Gao_etal_20Polynomial}
or positive-valued noises in the multiplicative noise models \cite{Rajendran_etal_21Structure}.

\paragraph{Present study.}

For those reasons, in this work we propose to handle the generic heteroscedastic
causal structure learning problem from the causal ordering standpoint.
For simplicity, we first consider Gaussian noises. We introduce \textbf{HOST}\footnote{Source code is available at \url{https://github.com/baosws/HOST}.}
(\textbf{H}eter\textbf{o}scedastic causal \textbf{ST}ructure
learning), a method that operates in polynomial time and produces
unique graphs that are guaranteed to be acyclic. More specifically,
we exploit the conditional normality of each variable given its ancestral
sets and order the variables based on their conditional normality
statistics. Once a causal order is retrieved, an array of conditional
independence tests is deployed to restore the causal graph. To the
best of our knowledge, \textbf{HOST} is the first polynomial-time
method that handles generic heteroscedasticity.

\paragraph{Contributions.}

Our key contributions in this study are summarized as follows:
\begin{itemize}
\item We propose to tackle the heteroscedastic causal structure learning
problem under Gaussian noises by causal ordering. This is the first
attempt to handle the heteroscedasticity in causal models with polynomial-time,
to the best of our knowledge.
\item We present \textbf{HOST}, a causal structure learning method for heteroscedastic
Gaussian noise models that scales polynomially in both sample size
and dimensionality and finds unique acyclic DAGs.
\item We demonstrate the effectiveness of the proposed \textbf{HOST} method
on a broad range of both synthetic and real data under multiple crucial
aspects. The numerical evaluations confirm the quality of our method
against state-of-the-arts on both the tasks of causal ordering and
structure learning.
\end{itemize}

\section{Background\label{sec:Preliminaries}}

Let $X=\left(X_{1},\ldots,X_{d}\right)$ be the $d$-dimensional random
vector of interest, and $x=\left(x_{1},\ldots,x_{d}\right)\in\mathbb{R}^{d}$
be an observation of $X$. We use subscript indices for dimensions
and superscript indices for samples. For instance, $x_{i}^{\left(k\right)}$
indicates the $k$-th sample of $x_{i}$. $P\left(\cdot\right)$ represents
distributions and $p\left(\cdot\right)$ represents probability density
functions.

The causal structure can be described by a directed acyclic graph
(DAG) $\mathcal{G}=\left(\mathcal{V},\mathcal{E}\right)$ where each
vertex $i\in\mathcal{V}=\left[1..d\right]$ represents a random variable
$X_{i}$, and each edge $\left(j\rightarrow i\right)\in\mathcal{\mathcal{E}}$
indicates that $X_{j}$ is a direct cause of $X_{i}$. In this graph,
the \textit{parents} of a variable is defined as the set of its direct
causes, i.e., $\pa_{i}^{\mathcal{G}}:=\left\{ j\in\mathcal{V}\mid\left(j\rightarrow i\right)\in\mathcal{E}\right\} $.
Similarly, we denote $\nd_{i}^{\mathcal{G}}$ as the non-descendants
set of $X_{i}$ (excluding itself). Whenever it is clear from context,
we drop the superscript $\mathcal{G}$ to reduce notational clutter.

\subsection{Heteroscedastic Causal Models}

We consider the Structural Causal Model as follows
\begin{definition}
(Heteroscedastic Causal Model (HCM)). In a HCM, each variable is generated
by

\begin{equation}
X_{i}:=\mu_{i}\left(X_{\pa_{i}}\right)+\sigma_{i}\left(X_{\pa_{i}}\right)E_{i}\label{eq:hcm}
\end{equation}

where $E\sim\mathcal{N}\left(\mathbf{0};\mathbf{I}_{d}\right)$ is
the exogenous Gaussian noise vector.
\end{definition}

This model is also referred to as Location-Scale Noise Model (LSNM)
\cite{Immer_etal_22Identifiability} or Heteroscedastic Noise Model
(HNM) \cite{Strobl_Lasko_22Identifying} (except that in HNM, $\sigma$
models the conditional mean absolute deviation instead of the conditional
standard deviation).

In addition, for ease of analysis, we suppose the following regularity
constraints on the functions governing the causal processes:
\begin{assumption}
\label{assu:regularity}(Regularity conditions). We assume the following
for all $i=1..d$:
\begin{itemize}
\item $\mu_{i}$ and $\sigma_{i}$ are deterministic and differentiable.
\item $\sigma_{i}>0$ .
\item $\frac{\partial\mu_{i}}{\partial X_{j}}\not\equiv0$ and $\frac{\partial\sigma_{i}}{\partial X_{j}}\not\equiv0$
for all $j\in\pa_{i}$.
\end{itemize}
\end{assumption}

Assumption~\ref{assu:regularity} outlines the conditions required to eliminate any trivial degenerate cases. To be more specific, the first condition favors deterministic functions over stochastic functions so the source of randomness is explicitly captured in the noise variables. Additionally, we restrict the functions to be differentiable for modelling simplicity. The second condition ensures that the conditional variance of a variable depends on its parent variables, which ensures the heteroscedasticity of the model. Lastly, the third condition stipulates that the functions are non-constant with respect to any parent $j\in\textrm{pa}_i$.

Following this model, it is clear that {\small{}$P\left(X_{i}\mid X_{\pa_{i}}\right)=\mathcal{N}\left(\mu_{i}\left(X_{\pa_{i}}\right),\sigma_{i}^{2}\left(X_{\pa_{i}}\right)\right)$},
where $\mu_{i}\left(x_{\pa_{i}}\right)$ and $\sigma_{i}^{2}\left(x_{\pa_{i}}\right)$
respectively models the conditional mean and variance of each variable
given its parents.

This model nicely follows the stable causal mechanism postulation
\cite{Janzing_Scholkopf_10Causal} in the sense that the causal mechanism,
i.e., the conditional distribution of an effect given its causes,
is simple and easy to be described, as it is always a canonical normal
distribution. However, this is not enough for identifiability since
the vice versa may not hold. Therefore, to ensure the HCM model is
identifiable, we assume that normality is only achieved for the sufficient
parents set. More formally:
\begin{assumption}
\label{assu:stable-hcm}Let $C\subseteq\nd_{i}$. Denote $X_{i}\mid X_{C}\sim\mathcal{N}$
when $\frac{X_{i}-\mathbb{E}\left[X_{i}\mid X_{C}\right]}{\sqrt{\mathbb{V}\left[X_{i}\mid X_{C}\right]}}\sim\mathcal{N}\left(0,1\right)$.
We assume that
\begin{equation}
X_{i}\mid X_{C}\sim\mathcal{N}\Leftrightarrow\pa_{i}\subseteq C
\end{equation}
\end{assumption}

Assumption~\ref{assu:stable-hcm} can be explained as follows: At first, each noise variable $E_i$ is a Gaussian variable, and when combined with the parents through the structural assignment $X_i:=\mu_i(X_{\textrm{pa}_i})+\sigma_i(X_{\textrm{pa}_i})E_i$, the normality is disrupted. Essentially, this assumption implies that we can only recover normality by controlling all parent variables, that is, by conditioning on them. The reason for this is that, in certain cases, the influences from the parents may not affect the normality of $E_i$, resulting in $X_i$ being a Gaussian variable and potentially being incorrectly detected as a node without any parent. In brief, the aim of this assumption is to exclude such cases.

We now argue that this is not a strict condition and can be achieved
in general.

Let us define
\begin{align}
U_{i} & =\frac{X_{i}-\mathbb{E}\left[X_{i}\mid X_{C}\right]}{\sqrt{\mathbb{V}\left[X_{i}\mid X_{C}\right]}}\\
 & =\underbrace{\frac{\mu_{i}\left(X_{\pa_{i}}\right)-\mathbb{E}\left[X_{i}\mid X_{C}\right]}{\sqrt{\mathbb{V}\left[X_{i}\mid X_{C}\right]}}}_{a_{i}\left(X_{C}\right)}+\underbrace{\frac{\sigma_{i}\left(X_{\pa_{i}}\right)}{\sqrt{\mathbb{V}\left[X_{i}\mid X_{C}\right]}}}_{b_{i}\left(X_{C}\right)}E_{i}\label{eq:gaussian-transform}
\end{align}

Assumption~\ref{assu:stable-hcm} implicitly excludes all the functions
$\mu_{i}\left(\cdot\right)$ and $\sigma_{i}\left(\cdot\right)$ such
that $U_{i}\sim\mathcal{N}\left(0,1\right)$ when $\pa_{i}\not\subset C$.

Given $U_{i},E_{i}\sim\mathcal{N}\left(0,1\right)$, a trivial condition
for Eqn.~(\ref{eq:gaussian-transform}) to happen is when $b_{i}\left(X_{C}\right)$
is a constant $t$ and $a_{i}\left(X_{C}\right)\sim\mathcal{N}\left(0,1-t^{2}\right)$.
Here $t$ is constrained to $\left[0,1\right]$.

When $\pa_{i}\not\subset C$ there exists $j\in\pa_{i}\setminus C$.
Since $\mathbb{V}\left[X_{i}\mid X_{C}\right]$ is not a function
of $X_{j}$, differentiating $b_{i}\left(X_{C}\right)$ with respect
to $X_{j}$ gives
\begin{equation}
0=\frac{1}{\sqrt{\mathbb{V}\left[X_{i}\mid X_{C}\right]}}\frac{\partial\sigma_{i}}{\partial X_{j}}
\end{equation}

which contradicts Assumption~\ref{assu:regularity}. On the other
hand, if $a_{i}$ is constant then it must be zero since $\mathbb{E}\left[a_{i}\right]=\mathbb{E}\left[U_{i}\right]=0$.
This is precisely when $b_{i}\equiv1$. In other words, neither $a\left(X_{\pa_{i}}\right)$
nor $b\left(X_{\pa_{i}}\right)$ can be constant.

Now for all positive integers $n$, the $n$-th order moments of $U_{i}$
and $E_{i}$ must match since they belong to the same distribution:
\begin{align}
\mathbb{E}\left[E_{i}^{n}\right] & =\mathbb{E}\left[\left(a_{i}+b_{i}E_{i}\right)^{n}\right]\\
 & =\mathbb{E}\left[\sum_{k=0}^{n}\binom{n}{k}a_{i}^{n-k}b_{i}^{k}E_{i}^{k}\right]\\
 & =\sum_{k=0}^{n}\binom{n}{k}\mathbb{E}\left[a_{i}^{n-k}b_{i}^{k}\right]\mathbb{E}\left[E_{i}^{k}\right]\label{eq:constraint}
\end{align}

Since $\mathbb{E}\left[E_{i}^{k}\right]$ is a known constant for
all positive integers $k$, if exist, three functions $p\left(X_{C}\right),a_{i}\left(X_{C}\right),b_{i}\left(X_{C}\right)$
must be \textit{inextricably interwoven} because the term $\mathbb{E}_{p\left(X_{C}\right)}\left[a_{i}^{n-k}\left(X_{C}\right)b_{i}^{k}\left(X_{C}\right)\right]$
must satisfy Eqn.~(\ref{eq:constraint}) \textit{for} \textit{all
}positive integers $n$. When limited to non-constant differentiable
functions, the solution space of these functions is even more severely
limited. Therefore, in conclusion, we expect Assumption~\ref{assu:stable-hcm}
to hold in general.


Interestingly, this assumption implies a faithfulness consequence
between the DAG $\mathcal{G}$ and the joint distribution of the data,
which is characterized by the following Corollary:
\begin{corollary}
\label{cor:faithfulness}$X_{i}\dep X_{j}\mid X_{\pa_{i}\setminus j}$
for all $j\in\pa_{i}$.
\end{corollary}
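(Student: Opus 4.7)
The plan is to proceed by contradiction. I would assume that for some $j \in \pa_i$ the independence $X_i \indep X_j \mid X_{\pa_i \setminus j}$ holds, and then show this collides with the regularity conditions of Assumption~\ref{assu:regularity} (and, equivalently, with Assumption~\ref{assu:stable-hcm}).

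The first step is to unpack the independence: it is equivalent to the statement that the conditional density $p(X_i \mid X_{\pa_i \setminus j}, X_j)$ does not depend on $X_j$ when $X_{\pa_i \setminus j}$ is held fixed. By the HCM structural equation in Eqn.~(\ref{eq:hcm}), this conditional density is exactly $\mathcal{N}(\mu_i(X_{\pa_i}), \sigma_i^2(X_{\pa_i}))$. The next step is to exploit the fact that the Gaussian family is uniquely parameterized by its mean and variance: invariance of this density in $X_j$ forces both $\mu_i$ and $\sigma_i$ to be constant in $X_j$, i.e., $\partial \mu_i / \partial X_j \equiv 0$ and $\partial \sigma_i / \partial X_j \equiv 0$. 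Since $j \in \pa_i$, this directly contradicts the third clause of Assumption~\ref{assu:regularity}. Alternatively, I could close the argument through Assumption~\ref{assu:stable-hcm}: the equality $P(X_i \mid X_{\pa_i}) = P(X_i \mid X_{\pa_i \setminus j})$ would imply $X_i \mid X_{\pa_i \setminus j} \sim \mathcal{N}$, which by stability requires $\pa_i \subseteq \pa_i \setminus j$, impossible because $j \in \pa_i$.

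The main obstacle, and really the only nontrivial step, is the transition from distributional invariance to pointwise invariance of the parameters --- that is, arguing rigorously that if $\mathcal{N}(\mu(x), \sigma^2(x))$ is constant as a density in some coordinate $x_j$, then both $\mu$ and $\sigma^2$ are themselves constant in $x_j$ almost everywhere. This follows from the injectivity of the map $(\mu, \sigma^2) \mapsto \mathcal{N}(\mu, \sigma^2)$ combined with the differentiability built into Assumption~\ref{assu:regularity}, but it is the one place I would be careful to spell out explicitly rather than treat as routine; everything else in the proof reduces to invoking Assumption~\ref{assu:regularity} (or Assumption~\ref{assu:stable-hcm}) on the resulting contradiction.
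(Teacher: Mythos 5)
Your proposal is correct, and it in fact contains two proofs. Your fallback route --- observing that the assumed independence gives $P\left(X_{i}\mid X_{\pa_{i}\setminus j}\right)=P\left(X_{i}\mid X_{\pa_{i}}\right)=\mathcal{N}\left(\cdot\right)$ and then invoking Assumption~\ref{assu:stable-hcm} with $C=\pa_{i}\setminus j$, which fails because $\pa_{i}\not\subseteq\pa_{i}\setminus j$ --- is exactly the paper's proof. Your primary route is genuinely different: it never uses Assumption~\ref{assu:stable-hcm} at all, instead reading the conditional independence as invariance of the Gaussian conditional density in $x_{j}$, and using injectivity of the parametrization $\left(\mu,\sigma^{2}\right)\mapsto\mathcal{N}\left(\mu,\sigma^{2}\right)$ to force $\frac{\partial\mu_{i}}{\partial X_{j}}\equiv0$ and $\frac{\partial\sigma_{i}}{\partial X_{j}}\equiv0$, contradicting the third clause of Assumption~\ref{assu:regularity}. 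What each buys: the paper's argument is one line but leans entirely on the stability assumption, which is itself a strong identifiability postulate; your argument shows the faithfulness conclusion already follows from the structural equation~(\ref{eq:hcm}) together with the regularity conditions alone, a mildly stronger and more self-contained statement. The price is the analytic step you rightly flag as the only delicate point: conditional independence only gives equality of densities almost everywhere with respect to the joint law, so to conclude that the partial derivatives vanish \emph{identically} you must upgrade a.e.\ constancy to everywhere constancy. This does go through here, because $\sigma_{i}>0$ makes every conditional density strictly positive, hence the joint density of $X$ has full support on $\mathbb{R}^{d}$; continuity of $\mu_{i}$ and $\sigma_{i}$ (from differentiability) then converts a.e.\ constancy in $x_{j}$ into identical vanishing of the partials on the whole domain, which is what is needed since the clause $\frac{\partial\mu_{i}}{\partial X_{j}}\not\equiv0$ in Assumption~\ref{assu:regularity} is a statement about the entire domain rather than an a.e.\ one. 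Spelling out that support argument would make your primary route fully rigorous.
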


\begin{proof}
Take any $j\in\pa_{i}$. If $X_{j}\indep X_{i}\mid X_{\pa_{i}\setminus j}$
then $P\left(X_{i}\mid X_{\pa_{i}\setminus j}\right)=P\left(X_{i}\mid X_{\pa_{i}\setminus j},X_{j}\right)=P\left(X_{i}\mid X_{\pa_{i}}\right)=\mathcal{N}\left(\cdot\right)$,
which contradicts Assumption~\ref{assu:stable-hcm} with $C=\pa_{i}\setminus j$.
\end{proof}

\subsection{Causal Ordering}

A valid causal order is an arrangement of the vertices in $\mathcal{G}$
so that a cause is always positioned before all of its effects. Given a DAG $\mathcal{G}$, a valid causal ordering of $\mathcal{G}$
is a permutation $\pi=\left[\pi_{1},\ldots,\pi_{d}\right]$ of the
sequence $\left[1,\ldots,d\right]$ such that $\pi_{j}\in\nd_{\pi_{i}}^{\mathcal{G}}$
for all $j<i$, or equivalently, $\pa_{i}\subseteq\pi_{<i}\subseteq\nd_{i}\;\forall i$.

\begin{algorithm}[t]
\caption{Algorithm for recovering the latents.\label{alg:extract-residuals}}

\begin{algorithmic}[1]

\REQUIRE{Dataset $\mathcal{D}=\left\{ \left(x_{C}^{\left(k\right)},x_{i}^{\left(k\right)}\right)\right\} _{i=1}^{n}$
and additional hyperparameters.}

\ENSURE{The latent variable $U_{i}$.}

\STATE Initialize two neural networks $\eta_{1}\left(X_{C};\theta\right)$
and $\ln\left(-\eta_{2}\left(X_{C};\theta\right)\right)$ with scalar
outputs.

\STATE Find the optimal parameter $\theta^{\ast}$ using, e.g., gradient
ascent, by maximizing{\scriptsize{}
\[
\theta^{\ast}=\argmax_{\theta\in\Theta}\mathbb{E}_{\mathcal{D}}\left[\eta_{1}\left(\cdot\right)X_{i}+\eta_{2}\left(\cdot\right)X_{i}^{2}+\frac{\eta_{1}{}^{2}\left(\cdot\right)}{4\eta_{2}\left(\cdot\right)}+\frac{1}{2}\ln\left(-2\eta_{2}\left(\cdot\right)\right)\right]
\]
}{\scriptsize\par}

\STATE \textbf{return} $U_{i}=\frac{X_{i}-\hat{\mu}_{i}\left(X_{C}\right)}{\hat{\sigma}_{i}\left(X_{C}\right)}$,
where
\begin{align*}
\hat{\mu_{i}}\left(X_{C}\right) & =-\frac{\eta_{1}\left(X_{C};\theta^{\ast}\right)}{2\eta_{2}\left(X_{C};\theta^{\ast}\right)}\text{, and}\\
\hat{\sigma_{i}}\left(X_{C}\right) & =\frac{1}{\sqrt{-2\eta_{2}\left(X_{C};\theta^{\ast}\right)}}
\end{align*}

\end{algorithmic}
\end{algorithm}

\begin{algorithm}[t]
\caption{Causal Ordering Algorithm.\label{alg:Causal-Ordering-Algorithm.}}

\begin{algorithmic}[1]

\REQUIRE{Dataset $\mathcal{D}=\left\{ x^{\left(k\right)}\right\} _{k=1}^{n}\in\mathbb{R}^{n\times d}$,
tolerance level $\epsilon\geq0$, and additional hyperparameters.}

\ENSURE{A causal order $\pi$.}

\STATE$\pi\leftarrow\left[\right]$.

\WHILE{$\left|\pi\right|<d$}

\FOR{$i\in\mathcal{V}\setminus\pi$}

\STATE Use Algorithm~\ref{alg:extract-residuals} to extract $U_{i}$
from $\left\{ \left(x_{\pi}^{\left(k\right)},x_{i}^{\left(k\right)}\right)\right\} _{k=1}^{n}$.\hfill$\triangleright$
Sec.~\ref{subsec:Extracting-the-latents}.

\STATE$w_{i}\leftarrow W\left(U_{i}\right)$, where $W$ is the Shapiro-Wilk
statistics.\hfill$\triangleright$ Sec.~\ref{subsec:Testing-normality}.

\ENDFOR

\STATE$w^{\ast}\leftarrow\argmax_{i\in\mathcal{V}\setminus\pi}w_{i}$.

\STATE$L\leftarrow\left\{ i\in\mathcal{V}\setminus\pi\mid w^{\ast}-w_{i}\leq\epsilon\right\} $.\hfill$\triangleright$
Sec.~\ref{subsec:Layer-Decomposition}.

\STATE Sort $L$ decreasing by $w$.

\STATE Concatenate $L$ after $\pi$.

\ENDWHILE

\STATE \textbf{return} $\pi$.

\end{algorithmic}
\end{algorithm}

\begin{algorithm}[t]
\caption{\textbf{HOST}: \textbf{H}eteroscedastic \textbf{O}rdering-based causal
\textbf{ST}ructure learning Algorithm.\label{alg:HOST}}

\begin{algorithmic}[1]

\REQUIRE{Dataset $\mathcal{D}=\left\{ x^{\left(k\right)}\right\} _{k=1}^{n}\in\mathbb{R}^{n\times d}$,
a conditional independence test $\text{CI}$, significance level $\alpha\in\left[0,1\right]$,
and additional hyperparameters.

$\ci\left(X,Y,Z\right)$ should return the $p$-value for the null
hypothesis $\mathcal{H}_{0}:X\indep Y\mid Z$.}

\ENSURE{The causal DAG $\mathcal{G}$.}

\STATE Use Algorithm~\ref{alg:Causal-Ordering-Algorithm.} to find
a causal order $\pi$.\hfill$\triangleright$ Sec.~\ref{subsec:Causal-Ordering}.

\STATE Initialize empty DAG $\mathcal{G}$ with vertices set $\mathcal{V}$.

\FOR{$j<i$}

\IF{$\ci\left(X_{\pi_{j}},X_{\pi_{i}},X_{\pi_{<i}\setminus\pi_{j}}\right)<\alpha$}

\STATE Add edge $\left(\pi_{j}\rightarrow\pi_{i}\right)$ to $\mathcal{G}$.\hfill$\triangleright$
Sec.~\ref{subsec:DAG-recovery}.

\ENDIF

\ENDFOR

\STATE \textbf{return} $\mathcal{G}$.

\end{algorithmic}
\end{algorithm}

The causal order is of great interest because learning the causal
structure from a causal order not only eases the necessary of the
acyclicity constraint of the learned directed graph, but it can also
be performed in polynomial runtime. Specifically, under our setting,
the causal structure can be uniquely identified from any valid causal
order by the following Lemma:
\begin{lemma}
\label{lem:Faithfulness} Given a causal order $\pi$ and joint distribution
$P_{X}$ consistent with the true DAG $\mathcal{G}=\left(\mathcal{V},\mathcal{E}\right)$ following Assumptions~\ref{assu:regularity} and \ref{assu:stable-hcm}.
Then, for all $j<i$,

\begin{equation}
\left(\pi_{j}\rightarrow\pi_{i}\right)\in\mathcal{E}\Leftrightarrow X_{\pi_{j}}\dep_{P_{X}}X_{\pi_{i}}\mid X_{\pi_{<i}\setminus\pi_{j}}
\end{equation}

where $<i$ represents the set of indices that come before $i$, specifically $1, 2, \ldots, i-1$.

\end{lemma}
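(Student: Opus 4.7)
The plan is to prove the two implications separately, with the easy direction coming from the DAG Markov structure inherent to any SCM, and the harder direction coming from Assumption~\ref{assu:stable-hcm}.

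For the ``$\Leftarrow$'' direction (no edge implies conditional independence), suppose $\pi_{j}\notin\pa_{\pi_{i}}$. Since $\pi$ is a valid causal order, $\pi_{<i}\subseteq\nd_{\pi_{i}}$, hence $\pa_{\pi_{i}}\subseteq\pi_{<i}\setminus\pi_{j}$ and in particular $\pi_{j}\in\nd_{\pi_{i}}\setminus\pa_{\pi_{i}}$. The local Markov property of the SCM gives $X_{\pi_{i}}\indep X_{\nd_{\pi_{i}}\setminus\pa_{\pi_{i}}}\mid X_{\pa_{\pi_{i}}}$. Partitioning $\pi_{<i}\setminus\pa_{\pi_{i}}$ as $\{\pi_{j}\}\cup\bigl(\pi_{<i}\setminus(\pa_{\pi_{i}}\cup\{\pi_{j}\})\bigr)$ and applying decomposition followed by weak union, I obtain the desired $X_{\pi_{i}}\indep X_{\pi_{j}}\mid X_{\pi_{<i}\setminus\pi_{j}}$.

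For the ``$\Rightarrow$'' direction, assume $\pi_{j}\in\pa_{\pi_{i}}$ and, for contradiction, that $X_{\pi_{i}}\indep X_{\pi_{j}}\mid X_{C}$ where $C:=\pi_{<i}\setminus\pi_{j}$. Conditional independence implies that the conditional law of $X_{\pi_{i}}$ given $X_{\pi_{<i}}$ does not depend on $X_{\pi_{j}}$ and thus coincides with the law given $X_{C}$. In particular $\mathbb{E}[X_{\pi_{i}}\mid X_{\pi_{<i}}]=\mathbb{E}[X_{\pi_{i}}\mid X_{C}]$ and $\mathbb{V}[X_{\pi_{i}}\mid X_{\pi_{<i}}]=\mathbb{V}[X_{\pi_{i}}\mid X_{C}]$, so the two standardized residuals agree as one and the same random variable. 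However, Assumption~\ref{assu:stable-hcm} forces this residual to be standard normal when conditioning on $\pi_{<i}$ (which contains $\pa_{\pi_{i}}$ because $\pi$ is a valid order), while forcing it to be non-normal when conditioning on $C$ (which omits the true parent $\pi_{j}$). This is the desired contradiction.

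The main difficulty lies in the ``$\Rightarrow$'' direction: a naive appeal to Corollary~\ref{cor:faithfulness} falls short because that corollary only rules out conditional independence when conditioning on $\pa_{\pi_{i}}\setminus\pi_{j}$, whereas here the conditioning set $\pi_{<i}\setminus\pi_{j}$ can contain additional non-parent ancestors that might, a priori, induce cancellations. The standardization argument sidesteps this by exploiting the full \emph{iff} form of Assumption~\ref{assu:stable-hcm}: the very same random variable cannot be simultaneously normal (under the parent-containing conditioning) and non-normal (under the parent-missing conditioning), so the hypothesized conditional independence is untenable.
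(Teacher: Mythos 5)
Your proof is correct, and it rests on the same two pillars as the paper's: the Causal Markov condition for one direction and Assumption~\ref{assu:stable-hcm} for the other. For the $\Leftarrow$ direction your decomposition-plus-weak-union derivation from the local Markov property is just the semi-graphoid rendering of the paper's argument, which instead writes out two chains of conditional-distribution equalities $P(X_{\pi_i}\mid X_{\pa_{\pi_i}}) = P(X_{\pi_i}\mid X_{\pi_{<i}\setminus\pi_j}) = P(X_{\pi_i}\mid X_{\pi_{<i}})$; the two presentations are interchangeable. The substantive difference is in the $\Rightarrow$ direction: the paper dismisses it with the single sentence that it ``follows directly from Corollary~\ref{cor:faithfulness}'', but, as you correctly point out, that corollary asserts dependence only given $X_{\pa_{\pi_i}\setminus\pi_j}$, while the lemma needs dependence given the possibly strictly larger set $X_{\pi_{<i}\setminus\pi_j}$, and dependence does not automatically transfer between conditioning sets. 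Your standardized-residual argument --- hypothesized conditional independence forces $P(X_{\pi_i}\mid X_{\pi_{<i}}) = P(X_{\pi_i}\mid X_{\pi_{<i}\setminus\pi_j})$, so a single random variable would have to be simultaneously standard normal (forward direction of Assumption~\ref{assu:stable-hcm} with $C=\pi_{<i}$, which contains $\pa_{\pi_i}$) and non-normal (reverse direction with $C=\pi_{<i}\setminus\pi_j$, which omits the parent $\pi_j$) --- is precisely the corollary's own proof technique re-run with the larger conditioning set, which is what the paper's one-line citation implicitly requires but does not spell out. So your write-up is not so much a different route as a repaired version of the paper's: it closes a real gap in the stated justification of the $\Rightarrow$ direction, at the cost of a slightly longer argument.
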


\begin{proof}
The $\Rightarrow$ direction follows directly from Corollary~\ref{cor:faithfulness}.
We now prove the $\Leftarrow$ direction.

Suppose $\pi_{j}\in\pi_{<i}\setminus\pa_{\pi_{i}}$. Since $\pa_{\pi_{i}}\subseteq\pi_{<i}$,
by the Causal Markov condition we have {\scriptsize{}
\begin{align}
P\left(X_{\pi_{i}}\mid X_{\pa_{\pi_{i}}}\right) & =P\left(X_{\pi_{i}}\mid X_{\pa_{\pi_{i}}},X_{\pi_{<i}\setminus\pa_{\pi_{i}}\setminus\pi_{j}}\right)\\
 & =P\left(X_{\pi_{i}}\mid X_{\pi_{<i}\setminus\pi_{j}}\right)\text{, and}\\
P\left(X_{\pi_{i}}\mid X_{\pa_{\pi_{i}}}\right) & =P\left(X_{\pi_{i}}\mid X_{\pa_{\pi_{i}}},X_{\pi_{<i}\setminus\pa_{\pi_{i}}\setminus\pi_{j}},X_{\pi_{j}}\right)\\
 & =P\left(X_{\pi_{i}}\mid X_{\pi_{<i}\setminus\pi_{j}},X_{\pi_{j}}\right)
\end{align}
}{\scriptsize\par}

Therefore, $X_{\pi_{i}}\indep X_{\pi_{j}}\mid X_{\pi_{<i}\setminus\pi_{j}}$,
which completes the proof.
\end{proof}
Additionally, the DAG recovered this way is unique by construction.

This property suggests performing a series of conditional independence
(CI) tests to recover the DAG with only $\mathcal{O}\left(d^{2}\right)$
tests, compared with an exponential number of tests in the classical
PC algorithm \cite{Spirtes_Glymour_91Algorithm}.

\section{HOST: Heteroscedastic Ordering-based Causal~Structure~Learning\label{sec:Method}}

Lemma~\ref{cor:faithfulness} has established that a causal ordering
is sufficient to recover the underlying DAG. Therefore, what remains
is how to retrieve such orderings. In this section, we explain in
details the causal ordering procedure and related technical considerations.

\subsection{Causal Order Identification\label{subsec:Causal-Ordering}}

Identifying the precise causal order is usually a challenging computational task. Nonetheless, certain assumptions can make the problem more manageable. Typically, the primary source of computational advantage in polynomial algorithms is the ability to identify a source or sink node in polynomial time. This is usually accomplished by assuming the ``equal variance'' condition in additive noise models (\cite{Chen_etal_19Causal}, \cite{Gao_etal_20Polynomial}), which allows a source node to be effectively identified among the remaining variables since it has the smallest conditional variance. However, in our study, this condition is not applicable due to the heteroscedastic nature of the model under consideration. Instead, we rely on the normality of the residuals to detect a source node, based on the assumption that only source nodes will have Gaussian residuals (Assumption~\ref{assu:stable-hcm}). Therefore, at each step, we can efficiently extract the residuals and test their normality in polynomial time. By repeating this process until all variables have been examined, we obtain a polynomial time algorithm for causal ordering.

Indeed, here we show that a valid causal order is fully identifiable under
Assumption~\ref{assu:stable-hcm}. To begin with, the following Lemma says that by conditioning on an
ancestral set, one can identify the subsequent source nodes in the
reduced graph where the ancestral set is removed, wherein the case
of empty ancestral set allows detecting source nodes in the original
graph.
\begin{lemma}
\label{lem:conditional-normality}Under assumptions \ref{assu:regularity} and \ref{assu:stable-hcm}, let $C$ be an ancestral set in
$\mathcal{G}$ (i.e., $C$ satisfies: $C\subseteq\bigcap_{i\not\in C}\nd_{i}^{\mathcal{G}}$).
Define $\mathcal{G}\setminus C$ as a reduced DAG with vertices set
$\mathcal{V}\setminus C$ and edges set $\left\{ \left(j\rightarrow i\right)\in\mathcal{E}\mid j,i\in\mathcal{V}\setminus C\right\} $.
For $i\not\in C$, if $X_{i}\mid X_{C}\sim\mathcal{N}$ then $i$
is a source node in $\mathcal{G}\setminus C$.
\end{lemma}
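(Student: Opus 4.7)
The plan is to reduce the lemma directly to the forward direction of Assumption~\ref{assu:stable-hcm}. Recall that being a source node in $\mathcal{G}\setminus C$ means having no parent in $\mathcal{V}\setminus C$; equivalently, $\pa_i\subseteq C$. So the target statement is precisely the implication $X_i\mid X_C\sim\mathcal{N}\Rightarrow\pa_i\subseteq C$, which is half of Assumption~\ref{assu:stable-hcm}. The only subtlety is that Assumption~\ref{assu:stable-hcm} is formulated for $C\subseteq\nd_i$, whereas the lemma hypothesizes only that $C$ is an ancestral set with $i\notin C$. Thus the main preparatory step is to verify that the ancestral hypothesis indeed forces $C\subseteq\nd_i$.

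First, I would argue: if $C$ is ancestral (every parent of a node in $C$ also lies in $C$) and $i\notin C$, then no $j\in C$ can be a descendant of $i$. Indeed, if $j\in C$ were a descendant of $i$, then $i$ would be an ancestor of $j$; since ancestral sets are closed under ancestors (a consequence of being closed under parents along any directed path), we would have $i\in C$, contradicting $i\notin C$. Hence $C\subseteq\nd_i$, placing us squarely in the regime of Assumption~\ref{assu:stable-hcm}.

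With this in hand, the $(\Rightarrow)$ direction of Assumption~\ref{assu:stable-hcm} applied to $i$ and $C$ yields $\pa_i\subseteq C$ from the hypothesis $X_i\mid X_C\sim\mathcal{N}$. Consequently $\pa_i\cap(\mathcal{V}\setminus C)=\emptyset$, so in the reduced graph $\mathcal{G}\setminus C$ the vertex $i$ has no incoming edges from the remaining nodes, i.e.\ $i$ is a source in $\mathcal{G}\setminus C$.

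There is really no hard obstacle here; the proof is essentially a bookkeeping exercise. If anything warrants care, it is the combinatorial lemma that an ancestral set is closed not just under parents but under all ancestors, which I would justify briefly by induction along directed paths (or simply by invoking it as a standard fact about ancestral sets). Everything else is an immediate application of Assumption~\ref{assu:stable-hcm}.
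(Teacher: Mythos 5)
Your proof is correct and follows essentially the same route as the paper's: invoke the forward direction of Assumption~\ref{assu:stable-hcm} to conclude $\pa_{i}\subseteq C$, hence $i$ has no parents in $\mathcal{G}\setminus C$ and is therefore a source node there. The one ``subtlety'' you devote effort to --- establishing $C\subseteq\nd_{i}^{\mathcal{G}}$ from closure under parents/ancestors --- is actually vacuous in this paper, because the lemma's parenthetical \emph{defines} an ancestral set as one satisfying $C\subseteq\bigcap_{j\notin C}\nd_{j}^{\mathcal{G}}$, so $C\subseteq\nd_{i}^{\mathcal{G}}$ holds immediately for any $i\notin C$; your derivation of this containment from the standard closed-under-parents definition is correct but unnecessary here.
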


\begin{proof}
Since $X_{i}\mid X_{C}\sim\mathcal{N}$, we have $\pa_{i}^{\mathcal{G}}\subseteq C$
by Assumption~\ref{assu:stable-hcm}. Thus, $\pa_{i}^{\mathcal{G}\setminus C}=\emptyset$,
i.e., $i$ is a source node in $\mathcal{G}\setminus C$.
\end{proof}
Therefore, by leveraging Lemma~\ref{lem:conditional-normality} we
can derive a causal ordering algorithm specialized for HCM models.
Algorithm~\ref{alg:Causal-Ordering-Algorithm.} demonstrates the
key steps of our causal ordering algorithm, in which we iteratively
employ normality testing subroutines to detect new source nodes every
step. The latents extraction and normality testing components are
discussed more thoroughly in the following subsection.

\subsection{Identifying Source Nodes via Normality~Statistics}

We have shown that testing for $X_{i}\mid X_{C}\sim\mathcal{N}$ is
essential to the causal ordering procedure. More precisely, given
an ancestral set $C$ and a variable $X_{i}$, we need to test if
$\frac{X_{i}-\mathbb{E}\left[X_{i}\mid X_{C}\right]}{\sqrt{\mathbb{V}\left[X_{i}\mid X_{C}\right]}}\sim\mathcal{N}\left(0,1\right)$.
Again, let us denote $\frac{X_{i}-\mathbb{E}\left[X_{i}\mid X_{C}\right]}{\sqrt{\mathbb{V}\left[X_{i}\mid X_{C}\right]}}$
by $U_{i}$, which we term as the ``latent variable'' as argued. Then,
the problem is reduced to the conventional normality testing problem
with the null hypothesis $\mathcal{H}_{0}:U_{i}\sim\mathcal{N}\left(0,1\right)$
and alternative hypothesis $\mathcal{H}_{1}:U_{i}\not\sim\mathcal{N}\left(0,1\right)$.
We next show how to extract the latents and test for their normality
in our framework.

\subsubsection{Extracting the Latents\label{subsec:Extracting-the-latents}}

To excerpt $U_{i}$ from $X_{i}$ and $X_{C}$, it is natural to estimate
$\mathbb{E}\left[X_{i}\mid X_{C}\right]$ and $\mathbb{V}\left[X_{i}\mid X_{C}\right]$
and plug the estimates into the expression of $U_{i}$. These estimations,
which are nonlinear regression problems, can be done separately, for
example, as in \cite{Strobl_Lasko_22Identifying}. However, that means
several regression stages must be performed sequentially, which possibly
become computationally involved.

Therefore, we prefer to jointly estimate the conditional expectation
and standard deviation instead. A naive and common approach for this
would be directly parametrizing the conditional mean and standard
deviation using neural networks, such as in \cite{Khemakhem_etal_21Causal},
e.g.,
\begin{align}
t_{i}\left(X_{C};\theta\right) & \approx\mathbb{E}\left[X_{i}\mid X_{C}\right]\\
s_{i}\left(X_{C};\theta\right) & \approx\sqrt{\mathbb{V}\left[X_{i}\mid X_{C}\right]}
\end{align}

under the model $X_{i}\sim\mathcal{N}\left(t_{i}\left(X_{C};\theta\right),s_{i}^{2}\left(X_{C};\theta\right)\right)$.
The optimal parameters $\theta^{\ast}$ can be found by maximizing
the Gaussian log likelihood:{\footnotesize{}
\begin{align}
\ln p_{\theta}\left(X_{i}\mid X_{C}\right) & =\ln\mathcal{N}\left(X_{i};t_{i}\left(X_{C};\theta\right),s_{i}^{2}\left(X_{C};\theta\right)\right)\\
 & =-\frac{\left(X_{i}-t_{i}\left(X_{C};\theta\right)\right)^{2}}{2s_{i}^{2}\left(X_{C};\theta\right)}-\ln s_{i}\left(X_{C};\theta\right)\label{eq:naive-objective}\\
\mathcal{L}\left(\theta\right) & =\mathbb{E}\left[\ln p_{\theta}\left(X_{i}\mid X_{C}\right)\right]\\
\theta^{\ast} & =\argmax_{\theta}\mathcal{L}\left(\theta\right)
\end{align}
}{\footnotesize\par}

However, Eqn.~(\ref{eq:naive-objective}) is not a jointly concave
objective function w.r.t. $t_{i}$ and $s_{i}$. This is because its
Hessian matrix is not always negative-definite, since
\begin{equation}
\frac{\partial^{2}\ln p_{\theta}\left(X_{i}\mid X_{C}\right)}{\partial s_{i}^{2}}=\frac{-3\left(X_{i}-t_{i}\left(X_{C};\theta\right)\right)^{2}}{s_{i}^{4}\left(X_{C};\theta\right)}+\frac{1}{s_{i}^{2}\left(X_{C};\theta\right)}
\end{equation}

can be positive when $s_{i}^{2}\left(X_{C};\theta\right)>3\left(X_{i}-t_{i}\left(X_{C};\theta\right)\right)^{2}$.
Hence, jointly optimizing for them both via--e.g., common gradient-based
solvers--\textit{does not guarantee the global maxima}, even if the
neural networks have infinite capacities.

To mitigate this issue, following \cite{Le_etal_05Heteroscedastic,Immer_etal_22Identifiability},
we adopt the natural parametrization of the Gaussian distribution.
Particularly, we parametrize $\mathcal{N}\left(\mu,\sigma^{2}\right)$
using two natural parameters $\eta_{1},\eta_{2}$ such that $\mu=-\frac{\eta_{1}}{2\eta_{2}}$
and $\sigma^{2}=-\frac{1}{2\eta_{2}}$. The log likelihood now becomes
\begin{align}
\ln p_{\theta}\left(X_{i}\mid X_{C}\right) & =\eta_{1}X_{i}+\eta_{2}X_{i}^{2}+\frac{\eta_{1}^{2}}{4\eta_{2}}+\frac{1}{2}\ln\left(-2\eta_{2}\right)\label{eq:natural-parametrization}
\end{align}

where $\eta_{1}\left(X_{C};\theta\right)$ and $\eta_{2}\left(X_{C};\theta\right)$
are functions of $X_{C}$ that are parametrized by $\theta$, and
$\eta_{2}\left(X_{C};\theta\right)<0$. One can then show that the
objective function~(\ref{eq:natural-parametrization}) is now jointly
concave in both $\eta_{1}$ and $\eta_{2}$ \cite{Le_etal_05Heteroscedastic},
which makes gradient-based solutions to the maximum likelihood objective
consistent.

To adapt to arbitrarily nonlinear relationships, $\eta_{1}$ and $\eta_{2}$
can be parametrized with neural networks. More specifically, we can
parametrize $\eta_{1}\left(X_{C};\theta\right):\mathbb{R}^{\left|C\right|\times\left|\Theta\right|}\rightarrow\mathbb{R}$
as a simple Multiple Layer Perceptron (MLP) with parameters space
$\Theta$. Regarding $\eta_{2}$, since it must be negative, we should
instead parametrize $\ln\left(-2\eta_{2}\left(X_{C};\theta\right)\right):\mathbb{R}^{\left|C\right|\times\left|\Theta\right|}\rightarrow\mathbb{R}$
as another MLP. However, for merely linear maps $\eta_{1}$ and $\ln\left(-2\eta_{2}\right)$,
the conditional distribution $P_{\theta}\left(X_{i}\mid X_{C}\right)$
can still have nonlinear expectations and standard deviations, which
already represent a wide class of distributions.

To summarize, the latents extraction subroutine is described in Algorithm~\ref{alg:extract-residuals}.

\subsubsection{Testing for the Latents' Normality\label{subsec:Testing-normality}}

Having the latents retrieved, we are now in a position to test for
their normality to detect source nodes.

Normality testing is a well-studied problem where a broad variety
of methods is available. For an overview, see, e.g., \cite{Thode_02Testing,Das_Imon_16Brief}.
In this study, we particularly employ the well-regarded Shapiro-Wilk
test \cite{Shapiro_Wilk_65An} since it has been shown to have a better
power against other common alternative approaches \cite{Razali_etal_11Power}.
That being said, the considering component of our method is modular
and any valid normality testing method can be employed in place of
the Shapiro-Wilk test.

We offer a brief explanation of the Shapiro-Wilk test in Appendix~\ref{subsec:Shapiro-Wilk-test}.
Simply put, the test's statistics of the Shapiro-Wilk test, denoted
as $W$, has the range of $\left[0,1\right]$ where the value of one
indicates perfect normality and the value of zero suggests strong
non-normality.

We can now detect new source nodes using the Shapiro-Wilk test. To
proceed, it is intuitive to adopt the whole hypothesis testing procedure
for each remaining variable. More specifically, with a significance
level $\alpha$ chosen prior to seeing the data, we compute the test
statistics $W$ for each variable and their associated $p$-values,
then select those whose $p$-values less than $\alpha$.

However, the null distribution of $W$ is complicated and unknown
\cite{Shapiro_Wilk_65An}, thus computing its $p$-value requires
Monte Carlo simulation, which can be computationally intense. Moreover,
in practice there may be no variable with $p\text{-value}<\alpha$,
making the iterative process unhalted. Hence, we take a slight detour
by selecting the variable with the highest $W$ statistics, which
does not require $p$-value calculation and always exists.

\subsubsection{Layer Decomposition\label{subsec:Layer-Decomposition}}

\begin{figure*}[t]
\begin{centering}
\begin{tabular}{c}
\includegraphics[width=\textwidth]{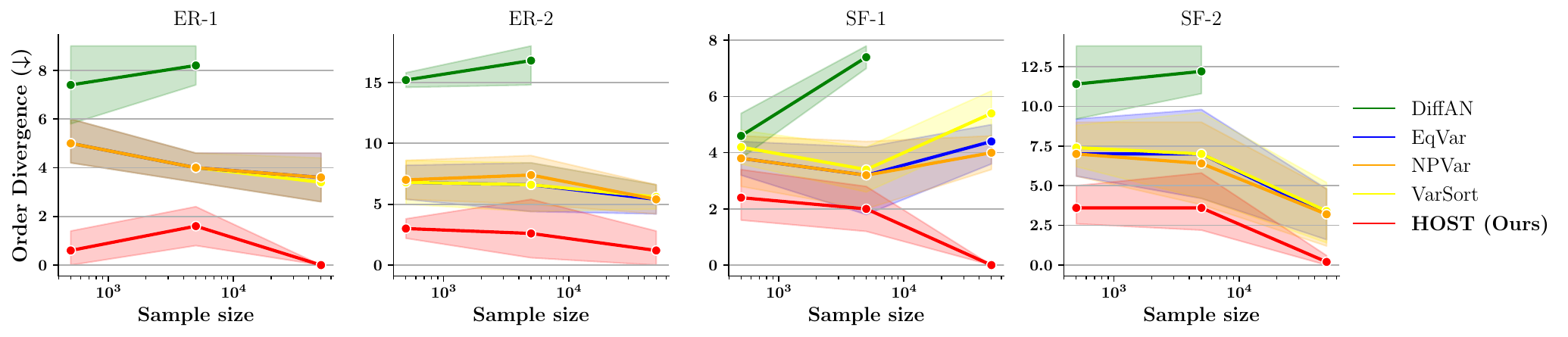}\tabularnewline
(a) Order Divergence (lower is better) with different Sample sizes.\tabularnewline
\includegraphics[width=1\textwidth]{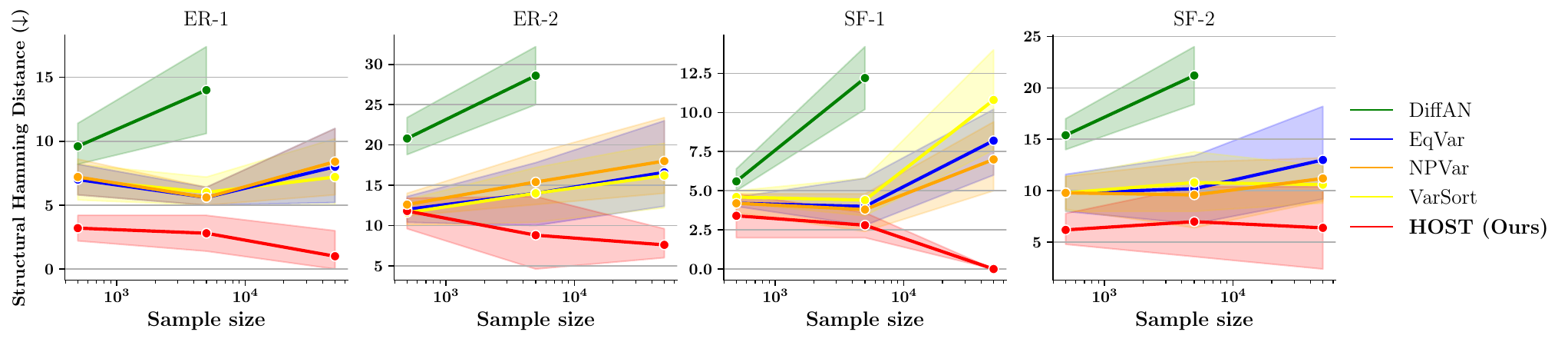}\tabularnewline
(b) Structural Hamming Distance (lower is better) with different Sample
sizes.\tabularnewline
\end{tabular}
\par\end{centering}
\centering{}\caption{Causal structure learning performance on synthetic data as function
of Sample size under Linear parameterization. We fix $d=10$ and vary
the sample size. The proposed \textbf{HOST} method is compared against
DiffAN \cite{Sanchez_etal_22Diffusion}, EqVar \cite{Chen_etal_19Causal},
NPVar \cite{Gao_etal_20Polynomial}, and VarSort \cite{Reisach_etall_21Beware}. Column: Graph
type. Shaded areas are 95\% confidence intervals over five independent
runs. LCIT~\cite{Duong_Nguyen_22Conditional} is used to recover
the DAGs from the causal orders. Missing data of DiffAN is due to
overly excessive runtime.\label{fig:sample-size-linear}}
\end{figure*}

For sparse graphs, each iteration may find multiple valid source nodes.
If only one is chosen then the others must have their calculations
re-done multiple times, which is wasteful of resource, and the chance
of miscalculation may arise in the subsequent steps as the ancestral
set increases in cardinality. Therefore, we should select as many
source nodes as possible in each step to avoid these pitfalls.

To do this, we employ a tolerance threshold $\epsilon$ onto the selection
of the source nodes based on their $W$ statistics. Particularly,
let $w^{\ast}$ be the largest $W$ statistics value of the remaining
variables in each step, we will include all variables with $W$ within
the $\epsilon$-radius of $w^{\ast}$ as new source nodes, which form
a ``layer'' similarly to \cite{Gao_etal_20Polynomial}. 

Further, there is a trade-off between accuracy and runtime when choosing
$\epsilon$. If $\epsilon$ is too small then the effect of computation
reduction is negligible, whereas larger $\epsilon$ will allow non-source
nodes into the layer. In our implementation, we choose a value of
$10^{-4}$, which is relatively small in comparison with the range
$\left[0,1\right]$ of $W$, since accuracy is preferred in our experiments.

Of course, there can still be false positive mistakes even with small
$\epsilon$ due to sampling randomness. We partially overcome this
issue by adding new candidate nodes into the ancestral set in the
\textit{decreasing order} of their $W$ statistics. This will help
ease the performance degradation for non-source nodes that are chosen
into the layer but still have low $W$ statistics.

Resultantly, the causal order will be the concatenation of the layers
collected after each step. Algorithm~\ref{alg:Causal-Ordering-Algorithm.}
summarizes the main steps of the Causal Ordering algorithm with all
the technical considerations discussed.

 Nevertheless, it is important to mention that our algorithm can still perform effectively even without utilizing the layer decomposition procedure. Furthermore, our layer decomposition only necessitates a single hyperparameter, the tolerance $\epsilon$. If this hyperparameter is set to zero, the feature is disabled, and this could be set as the default behavior for less experienced users of our algorithm.
 
\subsection{DAG Recovery From the Causal~Order\label{subsec:DAG-recovery}}

As stated in Lemma~\ref{lem:Faithfulness}, one can obtain the full
causal DAG from any valid causal order $\pi$. This is done by first
starting with an empty DAG, then examining every ordered pair of vertices
$\left(\pi_{j},\pi_{i}\right)$ to see if there is a directed edge
connecting them with the help of a series of CI tests.

Alternatively, instead of CI tests, one can employ feature selection
techniques with lower computational demands if runtime is preferred.
For example, GAM (Generalized Additive Model) feature selection is
widely used in several ordering-based methods that follow nonlinear
additive noise models, e.g., \cite{Buhlmann_etall_14Cam,Rolland_etal_22Score,Sanchez_etal_22Diffusion}.
More specifically, a GAM model is fitted to $\left(X_{\pi<i},X_{\pi_{i}}\right)$,
then any significant feature $X_{\pi_{j}}$ with $j<i$, evidenced
by a small $p$-value, will add an edge $\left(\pi_{j}\rightarrow\pi_{i}\right)$
to an initially empty DAG. Refer to \cite{Buhlmann_etall_14Cam} for
more details on this procedure.

Finally, we put together all the pieces of our \textbf{HOST} method
in Algorithm~\ref{alg:HOST}. The computational complexity of \textbf{HOST}
is given in Appendix~\ref{subsec:Complexity-analysis}, which is
polynomial in both sample size and dimensionality.

\section{Theoretical Properties}\label{sec:Theory}

\subsection{Identifiability of HCMs}
The identifiability of HCM is given by the following Theorem.
\begin{thrm}
    Under assumptions \ref{assu:regularity} and \ref{assu:stable-hcm}, HCMs are fully identifiable, i.e., there exists a unique graph $\mathcal{G}$ consistent with data $P_X$ generated according to Eqn.~\ref{eq:hcm}.
\end{thrm}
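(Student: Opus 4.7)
The plan is to show that the joint distribution $P_{X}$ uniquely determines $\mathcal{G}$ in two stages: (i) extract a valid causal ordering $\pi$ from $P_{X}$ via iterated conditional-normality testing, and (ii) invoke Lemma~\ref{lem:Faithfulness} to pin down the edge set from $\pi$ and $P_{X}$. The thesis then follows because the full procedure uses only quantities computable from $P_{X}$, so any two HCMs inducing the same $P_{X}$ must share the same DAG.

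For stage (i), I would build the ordering layer by layer. Set $C_{0}=\emptyset$, which is trivially an ancestral set. Given an ancestral set $C_{t}\subseteq\mathcal{V}$, define
\[
L_{t+1}=\bigl\{\,i\in\mathcal{V}\setminus C_{t}:X_{i}\mid X_{C_{t}}\sim\mathcal{N}\,\bigr\}.
\]
Because $C_{t}$ is ancestral, every $i\notin C_{t}$ satisfies $C_{t}\subseteq\nd_{i}$, so Assumption~\ref{assu:stable-hcm} applies and, combined with Lemma~\ref{lem:conditional-normality}, identifies $L_{t+1}$ as exactly the set of source nodes of $\mathcal{G}\setminus C_{t}$. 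Since every non-empty finite DAG has at least one source, $L_{t+1}\neq\emptyset$ whenever $\mathcal{V}\setminus C_{t}\neq\emptyset$; and $C_{t+1}:=C_{t}\cup L_{t+1}$ is again ancestral because any ancestor of $v\in L_{t+1}$ is either a parent (already in $C_{t}$) or an ancestor of a parent (in $C_{t}$ by induction). Thus the process terminates in finitely many steps with $C_{T}=\mathcal{V}$, and concatenating $L_{1},\ldots,L_{T}$ (in any within-layer order) yields a valid causal ordering $\pi$. Crucially, each $L_{t+1}$ is defined purely through conditional distributions read off from $P_{X}$, so $\pi$ is a function of $P_{X}$. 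For stage (ii), I would apply Lemma~\ref{lem:Faithfulness} directly: for every $j<i$, the edge $(\pi_{j}\rightarrow\pi_{i})$ belongs to $\mathcal{E}$ iff $X_{\pi_{j}}\dep X_{\pi_{i}}\mid X_{\pi_{<i}\setminus\pi_{j}}$, a relation determined by $P_{X}$; hence the edge set, and therefore $\mathcal{G}$, is uniquely recovered from $P_{X}$.

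The main obstacle is verifying that Assumption~\ref{assu:stable-hcm}, stated only for subsets $C\subseteq\nd_{i}$, can be invoked at every inductive step with $C=C_{t}$. This reduces to maintaining the invariant that $C_{t}$ is ancestral, which guarantees $C_{t}\subseteq\nd_{i}$ for all $i\notin C_{t}$; the short inductive argument above preserves the invariant under the layer update. A secondary delicate point is that the bi-implication in Assumption~\ref{assu:stable-hcm} is needed in both directions at each step: the $\Leftarrow$ direction ensures every source of $\mathcal{G}\setminus C_{t}$ is placed into $L_{t+1}$ (so the process makes progress), while the $\Rightarrow$ direction ensures no non-source sneaks into $L_{t+1}$ (so $C_{t+1}$ stays ancestral and the ordering remains valid). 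Once these two invariants are secured, the proof proceeds essentially by reading off the lemmas already proved in the paper.
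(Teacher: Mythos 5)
Your proposal is correct and follows essentially the same route as the paper's own proof sketch: iterate Lemma~\ref{lem:conditional-normality} to peel off source layers and build a valid causal ordering by induction, then apply Lemma~\ref{lem:Faithfulness} to recover the edge set from conditional independence relations in $P_X$. Your write-up merely makes explicit the details the paper leaves implicit (the ancestral-set invariant, non-emptiness of each layer, and the need for both directions of Assumption~\ref{assu:stable-hcm}), which is a faithful elaboration rather than a different argument.
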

\begin{proofsketch}
    By Lemma~\ref{lem:conditional-normality} a valid source node can be identified at every step, thus by induction a valid causal ordering can be identified. Then by Lemma~\ref{lem:Faithfulness} every true edge can be identified from the causal ordering, rendering the whole DAG identifiable.
\end{proofsketch}

\subsection{Invariance to Scaling and Translation}

Another important property of our method is the invariance to scaling and invariance, which is given by the following Theorem. 

\begin{thrm}
    The \textbf{HOST} algorithm is invariant to scaling and translation.
\end{thrm}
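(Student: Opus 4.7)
\begin{proofsketch}
The plan is to propagate a coordinate-wise affine transformation $\tilde X_i = a_i X_i + b_i$ (with $a_i\neq 0$ and $b_i\in\mathbb{R}$ for each $i$) through the three stages of Algorithm~\ref{alg:HOST} and check that every quantity on which the algorithm branches is preserved.

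First, I would analyse the latent extraction of Algorithm~\ref{alg:extract-residuals}. Because each coordinate transformation is a bijection, conditioning on $\tilde X_C$ is equivalent to conditioning on $X_C$, so the true conditional moments transform as
\begin{align*}
\mathbb{E}[\tilde X_i\mid \tilde X_C] &= a_i\,\mathbb{E}[X_i\mid X_C] + b_i,\\
\mathbb{V}[\tilde X_i\mid \tilde X_C] &= a_i^{2}\,\mathbb{V}[X_i\mid X_C].
\end{align*}
At the population optimum of the natural-parameter likelihood objective (assuming the MLPs parametrising $\eta_{1}$ and $\eta_{2}$ are expressive enough to represent $\mu_i,\sigma_i$ as well as their affine reparametrisations, which only re-adjusts the first-layer weights and biases), the fitted $\hat\mu_i$ and $\hat\sigma_i$ inherit the same equivariance. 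Consequently the extracted latent satisfies $\tilde U_i = \mathrm{sign}(a_i)\,U_i$ pointwise.

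Next, I would use the invariance of the Shapiro-Wilk statistic $W$ under translation, positive scaling, and sign flipping: the first two hold because $W$ is a squared correlation between the ordered sample and the expected standard-normal order statistics (numerator and denominator rescale by the same factor and are unaffected by a uniform shift), and the third holds because the normal order statistics are symmetric about zero, so reversing the order of the sample only reindexes both sums. Hence $W(\tilde U_i) = W(U_i)$ for every candidate $i$ at every iteration, so the $\argmax$ selection and the $\epsilon$-tolerance layer in Algorithm~\ref{alg:Causal-Ordering-Algorithm.} coincide for the two datasets and the returned order $\pi$ is identical.

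Finally, in the DAG-recovery loop the branching depends only on conditional independence of variable triples, which is a property of the joint distribution invariant under any coordinate-wise bijection; affine maps with $a_i\neq 0$ are such bijections. Thus the same edges are inserted and \textbf{HOST} returns the same DAG. The main obstacle is the equivariance claim in step one: to make it fully rigorous one must formalise the ``sufficient expressivity'' of the neural parametrisation and assume that the optimiser reaches the population optimum, since without these there is no a priori guarantee that $\hat\mu_i$ and $\hat\sigma_i$ themselves transform equivariantly. Once that is granted, the remainder of the argument is a direct consequence of the standard invariance of $W$ and of conditional independence.
\end{proofsketch}
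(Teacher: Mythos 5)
Your proof is essentially correct, but it takes a genuinely different route from the paper's. The paper argues at the level of the \emph{model class}: it shows that after the affine map $X'_i := cX_i + d$ the data is still generated by an HCM with the \emph{same} DAG, simply by absorbing the transformation into new mechanism functions $\mu'_i := c\mu_i + d$, $\sigma'_i := c\,\sigma_i$ (and $E'_i := -E_i$ when $c<0$), then verifying that Assumptions~\ref{assu:regularity} and \ref{assu:stable-hcm} still hold; invariance of \textbf{HOST} then follows by reduction to the identifiability theorem, with no reference to the algorithm's internals. You instead argue at the level of the \emph{algorithm's execution}: you propagate the transformation through each stage, showing the extracted latents satisfy $\tilde U_i = \mathrm{sign}(a_i)\,U_i$, that the Shapiro--Wilk statistic is invariant under translation, positive scaling, and sign flips (the sign-flip case indeed works because the normal order-statistic coefficients satisfy $a_k = -a_{n+1-k}$, so reversing the sorted sample preserves the squared numerator), and that the CI-testing stage depends only on conditional independence relations, which are preserved by coordinate-wise bijections. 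The paper's route is shorter and avoids any assumption about the estimator, since it never opens the black box; your route gives a finer, operational explanation of \emph{why} each computed quantity coincides, but at the price of the idealization you correctly flag: the claim that the fitted $\hat\mu_i,\hat\sigma_i$ transform equivariantly requires both sufficient expressivity of the $\eta_1,\eta_2$ networks and attainment of the population optimum. Note that this expressivity requirement is slightly stronger than "re-adjusting first-layer weights": since $\tilde\eta_1 = \eta_1/a_i - 2b_i\eta_2/a_i^2$, the network for $\tilde\eta_1$ must represent a linear combination of the \emph{two} original natural-parameter functions, not merely a reparametrization of $\eta_1$ alone. Both proofs are sketches operating modulo estimation error, so this does not invalidate yours; it is simply the cost of the more concrete approach.
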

\begin{proofsketch}
If we scale and translate $X_i$ by a scale $c > 0$ and location $d$, i.e., $X'_i:=cX_i+d$, the new causal model will have all the structural assignments and noise variables unchanged, except for $X_i$. Instead, we can replace it with $X'_i:=\mu'_i(X_{\textrm{pa}_i})+\sigma'_i(X_{\textrm{pa}_i})E_i$, where $\mu'_i:=\mu_i\times c+d$ and $\sigma'_i:=\sigma_i\times c$. In case $c < 0$, the negation can be absorbed to $E'_i:=-E_i$, which is still a standard Gaussian noise mutually independent of other noises.

Note that both $\mu'_i$ and $\sigma'_i$ maintain the properties of $\mu_i$ and $\sigma_i$ as described in Assumption~\ref{assu:regularity}, namely being deterministic and differentiable, $\sigma'_i>0$, and are non-constant with respect to any parent variable $j\in\textrm{pa}_i$. 

Additionally, since scaling nor translation does not alter normality, Assumption~\ref{assu:stable-hcm} also applies to the new HCM. Consequently, the newly obtained HCM is also identifiable using our method.
\end{proofsketch}

This property allows us to standardizing data before applying the \textbf{HOST} algorithm without changing the result. This is useful since neural networks, which are employed in our framework, are sensitive to data scale, meaning if the original data scale is too large or too small, the learning process may struggle to converge.

We note that this step is not possible with methods based on the ``equal variance'' assumption (e.g., EqVar \cite{Chen_etal_19Causal} or NPVar \cite{Gao_etal_20Polynomial}), since scaling breaks the equality between the variables' variances, making their algorithms misbehave.

\section{Numerical Evaluations\label{sec:Results}}

\subsection{Experiment Setup}

\paragraph{Baselines.}

We evaluate both the causal ordering and causal structure learning
performance of the proposed \textbf{HOST} method with competitive
baselines in ordering-based causal structure learning, including:
\begin{description}
\item [{VarSort~\cite{Reisach_etall_21Beware}}] simply sorts the variables
according to their marginal variances based on the observation that
an effect usually has higher variance than its causes in the common
evaluation practice of structure learning methods on simulated data.
This acts as a sanity check for the complexity of our problem.
\item [{EqVar~\cite{Chen_etal_19Causal}}] models the data with linear
relationships and homoscedastic additive noises of equal variances.
At each step, the variables with minimum conditional variances given
the current ancestral set are chosen as next source nodes.
\item [{NPVar~\cite{Gao_etal_20Polynomial}}] extends EqVar by considering
nonlinear causal mechanisms using nonparametric regression techniques.
It is claimed to be able to handle heteroscedasticity, but only with
constant expected noise variances, i.e., $\mathbb{E}\left[\mathbb{V}\left[X_{i}\mid X_{\pa_{i}}\right]\right]$.
\item [{DiffAN~\cite{Sanchez_etal_22Diffusion}}] considers nonlinear
additive noise models with homoscedastic Gaussian noises. Variables
with constant partial derivatives of the score function, learned with
diffusion models, are selected as sink nodes every step.
\end{description}

Additionally, for completeness, in Appendix~\ref{subsec:Additional-Experiments} we also compare \textbf{HOST} with other popular baselines that are not polynomial-time, including CAM \cite{Buhlmann_etall_14Cam}, GOLEM \cite{Ng_etal_2020Role}, and GraN-DAG \cite{Lachapelle_etal_2020Gradient}.

\begin{figure*}[t]
\begin{centering}
\begin{tabular}{c}
\includegraphics[width=1\textwidth]{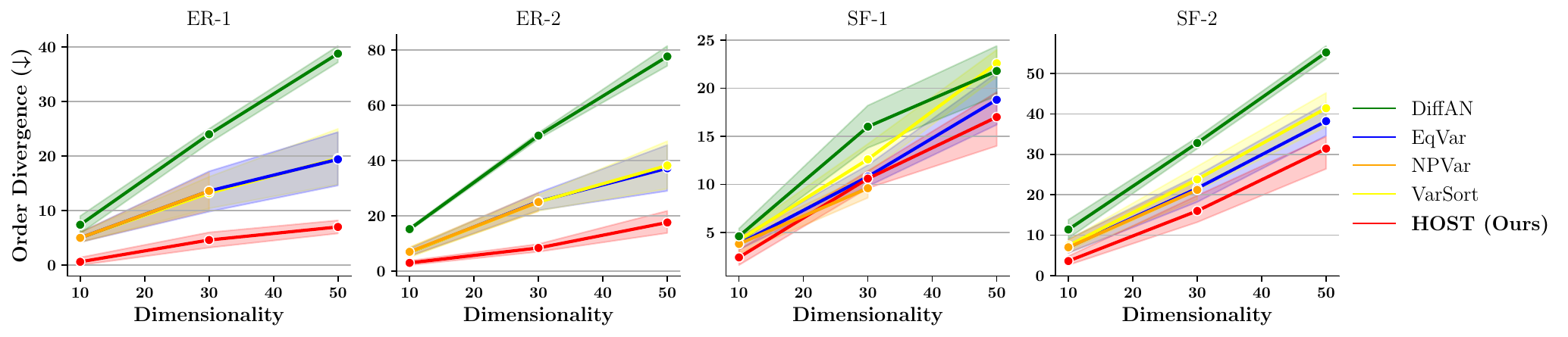}\tabularnewline
(a) Order Divergence (lower is better) with different Dimensionalities.\tabularnewline
\includegraphics[width=1\textwidth]{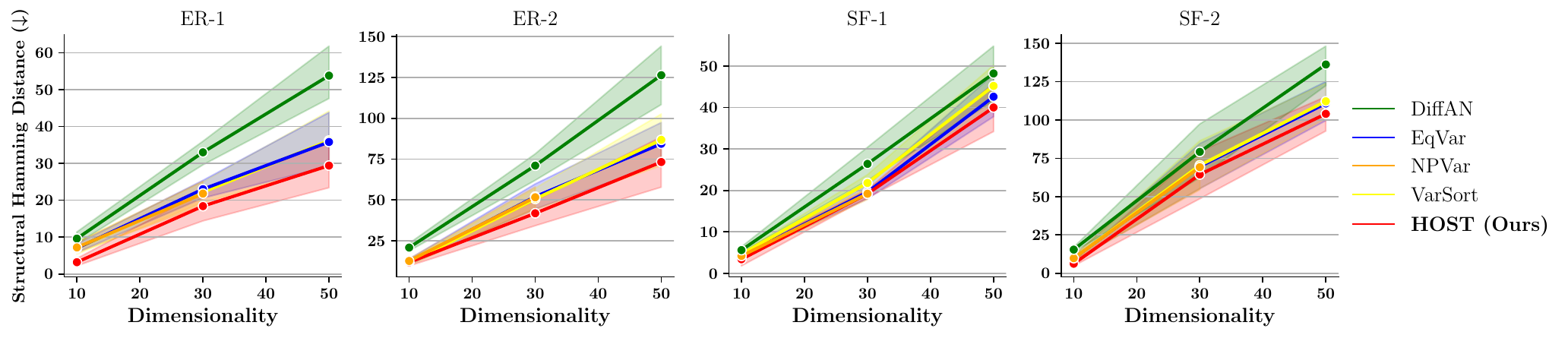}\tabularnewline
(b) Structural Hamming Distance (lower is better) with different Dimensionalities.\tabularnewline
\end{tabular}
\par\end{centering}
\centering{}\caption{Causal structure learning performance on synthetic data as function
of Dimensionality under Linear parameterization. We fix $n=500$ and
vary the dimensionality. The proposed \textbf{HOST} method is compared
against DiffAN \cite{Sanchez_etal_22Diffusion}, EqVar \cite{Chen_etal_19Causal},
NPVar \cite{Gao_etal_20Polynomial}, and VarSort \cite{Reisach_etall_21Beware}. Column: Graph
type. Shaded areas are 95\% confidence intervals over five independent
runs. LCIT~\cite{Duong_Nguyen_22Conditional} is used to recover
the DAGs from the causal orders. Missing data of NPVar is due to overly
excessive runtime.\label{fig:dimensionality-linear}}
\end{figure*}

\paragraph{Metrics.}

For assessing the causal ordering accuracy, we employ the Order Divergence
measure proposed by \cite{Rolland_etal_22Score}, which counts the
number of edges that are incorrectly ordered:
\[
\text{OrderDivergence}\left(\pi,\mathcal{E}\right)=\sum_{j<i}\mathds{1}\left[\left(\pi_{i}\rightarrow\pi_{j}\right)\in\mathcal{E}\right]
\]

Regarding causal structure learning, we adopt the common metric of
Structural Hamming Distance, which measures the number of edge additions,
removals, and reversals to transform the predicted DAG to the ground
truth DAG.

In addition, to tackle scenarios where there is an imbalance between missing and extra edges, we also employ the $F_1$ score and AUC (Area Under the Receiver Operating Characteristic Curve) measures, which consider both false negatives and false positives. The performance of our proposed method under these metrics is provided in Appendix~\ref{subsec:Additional-Experiments}, demonstrating the superiority of our proposed \textbf{HOST} method over the baselines in terms of both $F_1$ score and AUC. This finding is consistent with the results obtained using the SHD measure given in the next subsection.

\paragraph{DAG recovery methods.}

To remove the influence of the choice of the DAG recovery method from
the orderings onto the structure learning performance, we use the
same algorithm for all methods. Particularly, we consider two options,
first is the CI testing approach using the recent state-of-the-art
test LCIT \cite{Duong_Nguyen_22Conditional} since it is generic and
linearly scalable in both dimensionality and sample size, along with
the GAM feature selection approach which is usually employed in methods
based on the equal variances assumption \cite{Buhlmann_etall_14Cam,Rolland_etal_22Score,Sanchez_etal_22Diffusion}.
Both of them use $\alpha=0.001$.

\paragraph{Synthetic data.}

We generate data under two typical DAG settings. First is the Erd\H{o}s-Rényi
(ER) graph \cite{Erdos_Renyi_60Evolution}, where edges are independently
added with an equal probability, and we control the expected in-degree
to be one (ER-1 graphs) or two (ER-2 graphs). Second is the Scale
Free (SF) graph \cite{Barabasi_Albert_99Emergence}, with the SF-1
(SF-2) variant being the DAG initially started with one (two) nodes
and every subsequent node is added with one (two) random edges from
the previously added nodes. We also consider the scenarios with denser graphs (ER-4 and SF-4) in Appendix~\ref{subsec:Additional-Experiments}.

About the functional mechanisms, we generate $\mu_{i}\left(\cdot\right)$
and $\sigma_{i}\left(\cdot\right)$ from the natural parametrizations
and consider both linear and nonlinear scenarios. For the linear case,
$\eta_{1}$ and $\ln\left(-2\eta_{2}\right)$ are homogeneous linear
maps $a^{\top}x_{\pa_{i}}$ for each node. Meanwhile, in the nonlinear
case, these parameters associated with each parent are chosen from
the set $\left\{ a^{\top}x,x^{2},\sin\left(2\pi x\right),\ln\left(x-\min\left(x\right)+1\right),\frac{1}{1+e^{-x}}\right\} $,
and then summed afterward, e.g., $\eta_{1}\left(x_{\pa_{i}}\right)=\sum_{j\in\pa_{i}}\eta_{1}\left(x_{j}\right)$.

We emphasize here that Assumption~\ref{assu:stable-hcm} is not imposed in our data generating process, which means that it is possible for it to be violated in the generated data. However, even without enforcing this assumption, our method still shows robustness compared to the baselines in situations where the model is not well-specified, as evidenced by the empirical results reported in the next subsection.

\subsection{Results on Synthetic Data}

Here we present the empirical results for the linear parametrization
setting with LCIT being the DAG recovery method. The additional experimental results, including the consideration of other evaluation metrics, nonlinear settings, GAM feature selection, denser graphs, as well as the recorded runtimes can be found in Appendix~\ref{subsec:Additional-Experiments}.

\paragraph{Effect of sample size.}

We first study the influence of sample size to the learning performance
of the considering methods by fixing the number of nodes at $d=10$
and vary the sample size $n$ from 500 to 50,000 (Figure~\ref{fig:sample-size-linear}).
The results suggest that our method is far more effective than the
baseline counterparts across all scenarios, especially in the task
of causal ordering. Additionally, it can be observed that our method
converges to zero error on both metrics as the sample size increases,
empirically suggesting its consistency.

\paragraph{Effect of dimensionality.}

Next, we study the performance variation of all methods when the dimensionality
changes. To this end, we fix the sample size at $n=500$ and vary
the number of nodes from 10 to 50 (Figure~\ref{fig:dimensionality-linear}).
In this setting, while all methods show the same degradation in performance,
our proposed \textbf{HOST} method is still the leading performer over
all aspects.

\begin{table}
\begin{centering}
\caption{Causal structure learning performance on real data. We compare the
proposed \textbf{HOST} with DiffAN \cite{Sanchez_etal_22Diffusion},
EqVar \cite{Chen_etal_19Causal}, NPVAR \cite{Gao_etal_20Polynomial},
and VarSort \cite{Reisach_etall_21Beware} on the Sachs data set \cite{Sachs_etall_05Causal}.
The values are \textit{mean \textpm{} standard error} over the same
ten independent subsamples of size $700$.\label{tab:real-data}}
\par\end{centering}
\medskip{}

\centering{}\resizebox{\columnwidth}{!}{%
\begin{tabular}{c|ccc}
\toprule
 & Order & \multicolumn{2}{c}{Structural Hamming Distance ($\downarrow$)}\tabularnewline
 & Divergence ($\downarrow$) & (CI testing) & (GAM feature selection)\tabularnewline
\midrule
DiffAN & $9.4\pm0.70$ & $15.0\pm1.05$ & $15.6\pm0.84$\tabularnewline
EqVar & $7.2\pm0.42$ & \textbf{$\mathbf{13.5\pm0.71}$} & $\mathbf{12.0\pm0.47}$\tabularnewline
NPVar & $7.2\pm0.42$ & \textbf{$\mathbf{13.5\pm0.71}$} & $\mathbf{12.0\pm0.47}$\tabularnewline
VarSort & $7.2\pm0.42$ & \textbf{$13.7\pm0.95$} & $\mathbf{12.0\pm0.47}$\tabularnewline
\midrule
\textbf{HOST} (Ours) & $\mathbf{4.4\pm0.70}$ & $\mathbf{13.5}\pm0.97$ & $13.3\pm0.95$\tabularnewline
\bottomrule
\end{tabular}}
\end{table}

\subsection{Results on Real Data}

To demonstrate the effectiveness of our method in the real-life setting,
we conduct experiments on the well-known benchmark data set Sachs
\cite{Sachs_etall_05Causal}, where the ground truth causal network
is available. We employ the observational portion of the data set
with 853 observations, 11 vertices, and 17 edges in the causal graph.

Table~\ref{tab:real-data} displays the empirical results. Our method
achieves the best accuracy in recovering the causal order with an
error at only half of that for other methods, while it is competitive
with the state-of-the-arts in recovering the causal DAG.

\section{Conclusions\label{sec:Conclusions}}

This study presents the \textbf{HOST} algorithm for identifying the
causal structures under heteroscedastic causal models. By exploiting
the conditional normalities with the help of normality tests, we devise
a simple procedure for recovering causal orderings, which are used
to uniquely recover the causal structures. The empirical results on
a wide range of synthetic and real data show that \textbf{HOST} is
able to consistently outperform existing state-of-the-art ordering-based
methods in both causal ordering and structure learning.

\balance
\bibliography{host}

\onecolumn

\section*{Appendix for ``Heteroscedastic Causal Structure Learning''}

\renewcommand\thesubsection{\Alph{subsection}}

\subsection{The Shapiro-Wilk Test for Normality\label{subsec:Shapiro-Wilk-test}}

In the Shapiro-Wilk test, the test's statistics is the squared correlation
between the order statistics of the observations and the expected
order statistics of the samples following the Gaussian distribution.
More specifically, consider $n$ i.i.d. samples $\mathbf{u}_{i}=\left\{ u_{i}^{\left(k\right)}\right\} _{k=1}^{n}$
of $U_{i}$. Without loss of generality, assume they are sorted, i.e.,
$u_{i}^{\left(k\right)}\leq u_{i}^{\left(k+1\right)}\;\forall k=1..n-1$.
Similarly, let $\mathbf{Z}=\left\{ Z_{k}\right\} _{k=1}^{n}$ be i.i.d.
Gaussian random variables and also take their order statistics, i.e.,
$Z_{k}\leq Z_{k+1}\;\forall k=1..n-1$. Subsequently, define $\mathbf{m}=\left\{ m_{k}=\mathbb{E}\left[Z_{k}\right]\right\} _{k=1}^{n}$
as the expectation of the order statistics.

Now, under the null hypothesis, $\mathbf{u}_{i}$ and $\mathbf{m}$
should be strongly correlated, meaning a correlation coefficient close
to one would suggest $U_{i}$ is normally distributed, and a number
closer to zero would indicate non-normality. The straightforward squared
correlation between these two sample sets is referred to as the Shapiro-Francia
statistics \cite{Shapiro_Francia_72Approximate}.

However, the Shapiro-Wilk statistics goes one step further as it also
takes into account the covariance matrix of $Z$. More specifically,
consider the covariance matrix $\Sigma$ of $Z$ where $\Sigma_{ij}=\mathbb{E}\left[\left(Z_{i}-m_{i}\right)\left(Z_{j}-m_{j}\right)\right]$
and let $\mathbf{a}=\frac{m^{\top}\Sigma^{-1}}{\left\Vert m^{\top}\Sigma^{-1}\right\Vert _{2}}$,
which is a unit vector. Then, the Shapiro-Wilk test statistics is
given by the squared correlation between $\mathbf{a}$ and $\mathbf{u}_{i}$:
\begin{equation}
W=\frac{\left(\sum_{k=1}^{n}a_{k}u_{i}^{\left(k\right)}\right)^{2}}{\sum_{k=1}^{n}\left(u_{i}^{\left(k\right)}-\bar{u_{i}}\right)^{2}}\label{eq:W-statistics}
\end{equation}

\subsection{Complexity Analysis\label{subsec:Complexity-analysis}}

\paragraph{Algorithm~\ref{alg:extract-residuals}~(Latents~Extraction).}

With gradient-based methods such as vanilla Stochastic Gradient Ascent
or Adam, the optimization process can be done in $\mathcal{O}\left(nd\right)$,
with $n$ and $d$ being sample size and dimensionality, respectively.
With more advanced optimizer for concave loss function, such as Newton
methods or Feasible Generalized Least-squares \cite{Immer_etal_22Identifiability},
the time complexity can be furthermore reduced.

\paragraph{Algorithm~\ref{alg:Causal-Ordering-Algorithm.}~(Causal~Ordering).}

The main loop contains $\mathcal{O}\left(d\right)$ iterations. In
each iteration, Algorithm~\ref{alg:extract-residuals} and $W$ statistics
evaluation are employed $\mathcal{O}\left(d\right)$ times. To evaluate
the $W$ statistics, at least $\mathcal{O}\left(n^{3}\right)$ of
runtime is needed due to the inverse of $\Sigma\in\mathbb{R}^{n\times n}$,
however, practical approximations for large $n$ exist and can scale
very well in $n$, such as the SciPy package \cite{Virtanen_etal_20Scipy}
which is used in our implementation. Additionally, sorting $L$ can
be done in $\mathcal{O}\left(d\ln d\right)$ time. Assuming $n>d^{2}$
for simplicity, then Algorithm~\ref{alg:Causal-Ordering-Algorithm.}
operates in $\mathcal{O}\left(n^{3}d^{2}\right)$.

\paragraph{Algorithm~\ref{alg:HOST}~(HOST).}

There are $\mathcal{O}\left(d^{2}\right)$ CI tests to be performed.
CI tests that scale linearly with sample size and dimensionality of
the conditioning set exists, e.g., \cite{Duong_Nguyen_22Conditional}.
Thus, the DAG recovery step can be completed in $\mathcal{O}\left(nd^{3}\right)$
runtime, so the total complexity of the \textbf{HOST} algorithm is
$\mathcal{O}\left(nd^{2}\left(n^{2}+d\right)\right)$.

\subsection{Additional Experiments\label{subsec:Additional-Experiments}}

We provide empirical results of additional settings, including:
\begin{itemize}
\item Figure~\ref{fig:samplesize-F1-AUC} and Figure~\ref{fig:dimensionality-F1-AUC}: We study the causal structure learning performance of \textbf{HOST} and competing methods under $F_1$ score and AUC metrics. Our method is able to outperform all baseline methods under all metrics.

\item Figure~\ref{fig:denser-graph}: We study the causal structure learning performance of \textbf{HOST} and baselines under denser graphs settings (ER-4 and SF-4). The results, combined with those in Figure~\ref{fig:sample-size-linear}, suggest that as the graph density increases, all methods experience a proportional degradation in performance. However, our method still outperforms the others by a significant margin in all cases, as evidenced by the graph.

\item Figure~\ref{fig:runtime}: We compare the runtime of all methods
in the linear parametrization setting under the metric of causal ordering
time, since the DAG recovery step is performed similarly with a fixed
algorithm. Apart from the simple methods VarSort and EqVar which theoretically
have lower computational complexities, our method is faster than both
NPVar and DiffAN, except in the small sample size settings, which
indicates the scalability of our method.

\item Figure~\ref{fig:samplesize-linear-gam} and Figure~\ref{fig:dimensionality-linear-gam}:
Causal structure learning performance in the linear parametrization
setting with GAM feature selection as the DAG recovery method.

\item Figure~\ref{fig:nonlinear}: Causal structure learning performance
in the nonlinear parametrization setting with LCIT as the DAG recovery
method. In this setting we use MLPs with one hidden layer of four
units for $\eta_{1}$ and $\ln\left(-2\eta_{2}\right)$.

\item Figure~\ref{fig:non-polynomial-time-methods}: We compare \textbf{HOST} with popular baselines that are not polynomial time, including CAM \cite{Buhlmann_etall_14Cam}, GOLEM \cite{Ng_etal_2020Role}, and GraN-DAG \cite{Lachapelle_etal_2020Gradient}.

\end{itemize}

\begin{figure*}[t]
\begin{centering}
\begin{tabular}{c}
\includegraphics[width=1\textwidth]{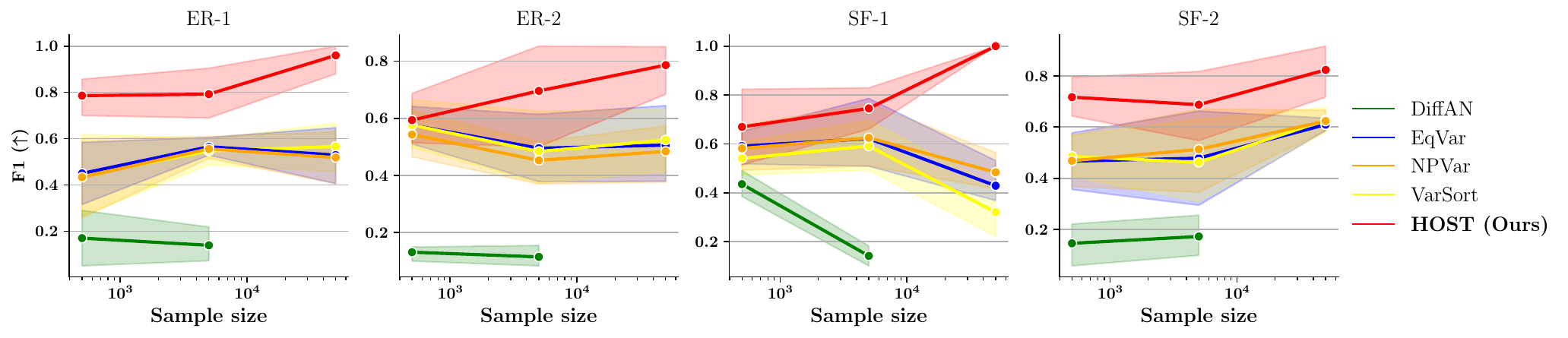}\tabularnewline
(a) $F_1$ score (higher is better) with different Sample sizes.\tabularnewline
\includegraphics[width=1\textwidth]{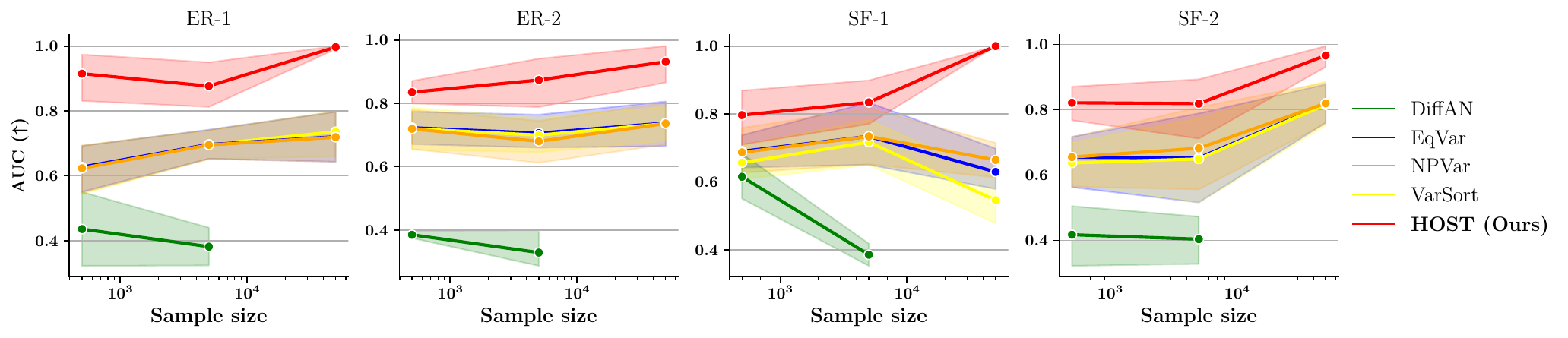}\tabularnewline
(b) AUC (higher is better) with different Sample sizes.\tabularnewline
\end{tabular}
\par\end{centering}
\centering{}\caption{Causal structure learning performance (in $F_1$ score and AUC) on synthetic data as function
of Sample size under Linear parameterization. We fix $d=10$ and
vary the sample size. The proposed \textbf{HOST} method is compared
against DiffAN \cite{Sanchez_etal_22Diffusion}, EqVar \cite{Chen_etal_19Causal},
NPVar \cite{Gao_etal_20Polynomial}, and VarSort \cite{Reisach_etall_21Beware}. Column: Graph
type. Shaded areas are 95\% confidence intervals over five independent
runs. LCIT~\cite{Duong_Nguyen_22Conditional} is used to recover
the DAGs from the causal orders. Missing data of NPVar is due to overly
excessive runtime.\label{fig:samplesize-F1-AUC}}
\end{figure*}

\begin{figure*}[t]
\begin{centering}
\begin{tabular}{c}
\includegraphics[width=1\textwidth]{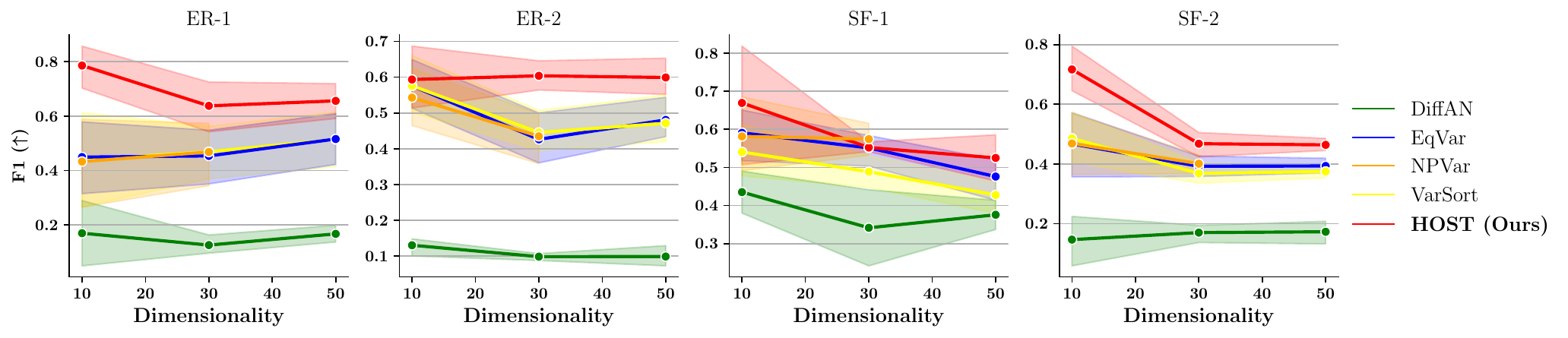}\tabularnewline
(a) $F_1$ score (higher is better) with different Dimensionalities.\tabularnewline
\includegraphics[width=1\textwidth]{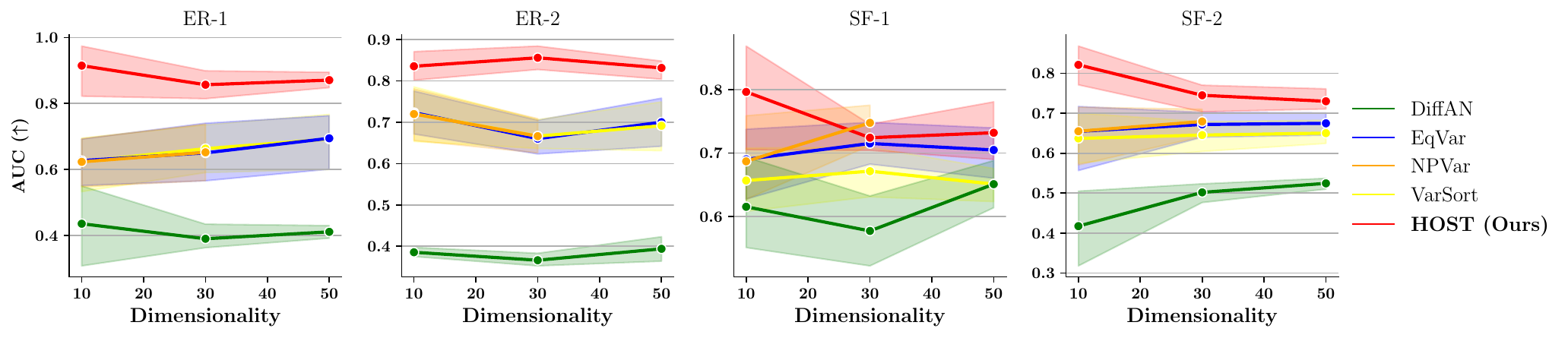}\tabularnewline
(b) AUC (higher is better) with different Dimensionalities.\tabularnewline
\end{tabular}
\par\end{centering}
\centering{}\caption{Causal structure learning performance (in $F_1$ score and AUC) on synthetic data as function
of Dimensionality under Linear parameterization. We fix $n=500$ and
vary the dimensionality. The proposed \textbf{HOST} method is compared
against DiffAN \cite{Sanchez_etal_22Diffusion}, EqVar \cite{Chen_etal_19Causal},
NPVar \cite{Gao_etal_20Polynomial}, and VarSort \cite{Reisach_etall_21Beware}
under the Linear parametrization setting. Column: Graph
type. Shaded areas are 95\% confidence intervals over five independent
runs. Missing data of NPVar is due to overly
excessive runtime.\label{fig:dimensionality-F1-AUC}}
\end{figure*}

\begin{figure*}[t]
\centering{}%
\begin{tabular}{cc}
\includegraphics[width=0.5\columnwidth]{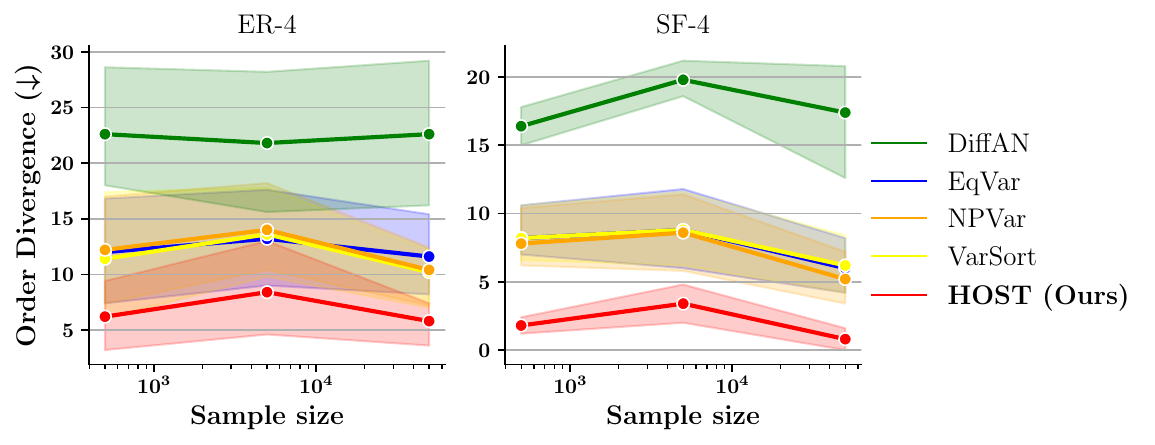} & \includegraphics[width=0.5\columnwidth]{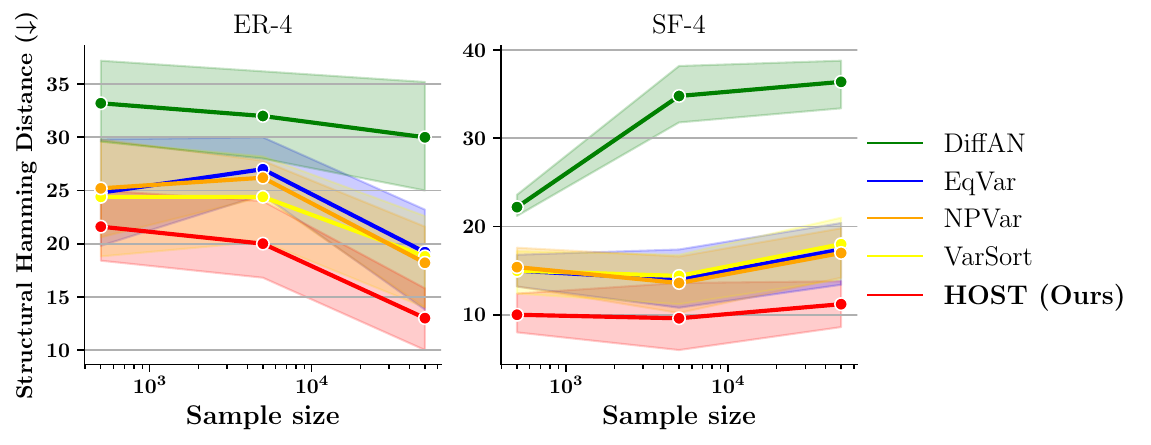}\tabularnewline
(a) Causal Ordering & (b) Causal Structure Learning\tabularnewline
\end{tabular}\caption{Causal structure learning performance on synthetic data under denser graph settings. We fix $n=5000$ and $d=10$.
The proposed \textbf{HOST} method is compared against DiffAN \cite{Sanchez_etal_22Diffusion},
EqVar \cite{Chen_etal_19Causal}, NPVar \cite{Gao_etal_20Polynomial},
and VarSort \cite{Reisach_etall_21Beware} under the Linear parametrization setting.
Column: Graph type. Error bars are 95\% confidence intervals over
five independent runs.\label{fig:denser-graph}}
\end{figure*}

\begin{figure}
\begin{centering}
\includegraphics[width=0.7\columnwidth]{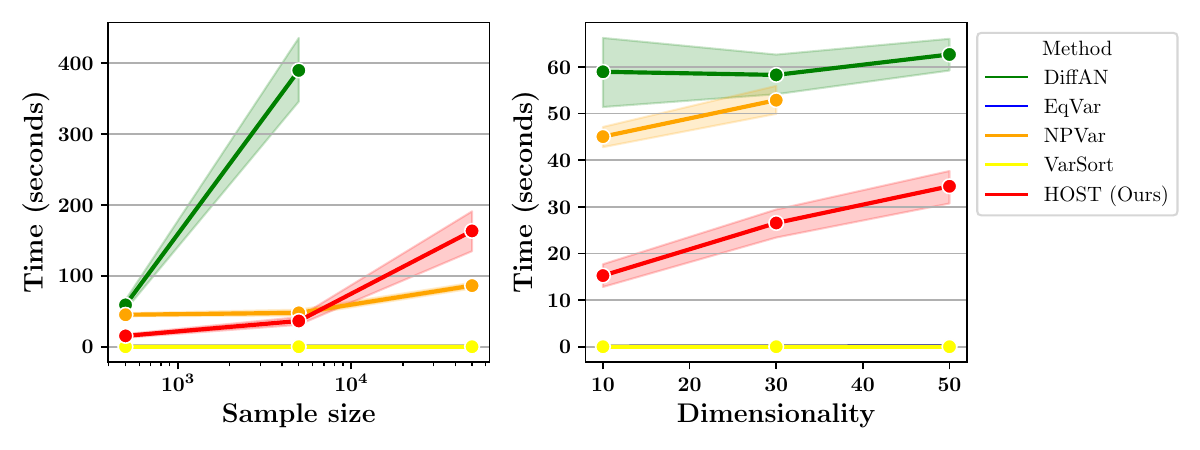}
\par\end{centering}
\centering{}\caption{Causal ordering runtime. Left: we fix $d=10$ and vary the sample
size; right: we fix $n=500$ and vary dimensionality. The proposed
\textbf{HOST} method is compared against DiffAN \cite{Sanchez_etal_22Diffusion},
EqVar \cite{Chen_etal_19Causal}, NPVar \cite{Gao_etal_20Polynomial},
and VarSort \cite{Reisach_etall_21Beware} under the Linear parametrization setting.
Shaded areas are 95\% confidence intervals over five independent runs.
Missing numbers of DiffAN and NPVar are due to significantly higher
runtimes compared with other competitors.\label{fig:runtime}}
\end{figure}

\begin{figure*}[t]
\begin{centering}
\includegraphics[width=1\textwidth]{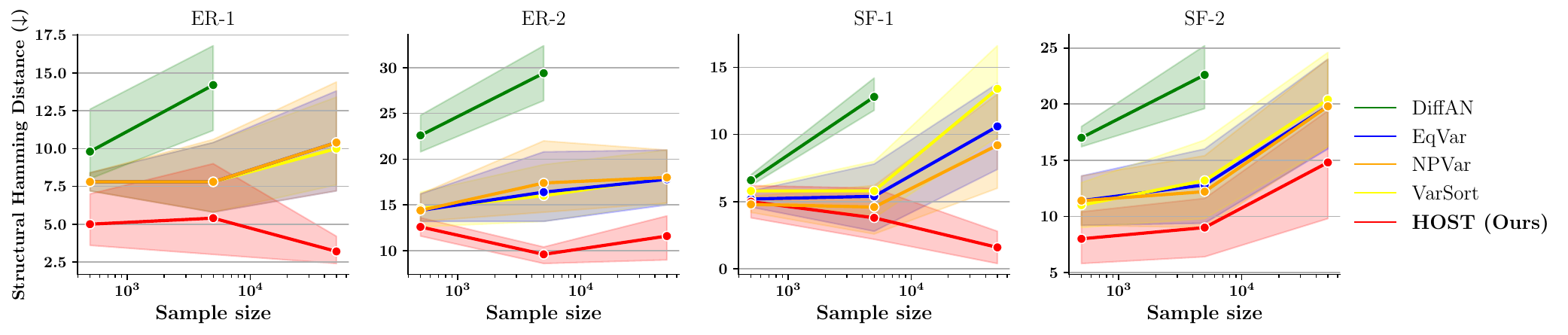}\caption{Causal structure learning performance on synthetic data as function
of Sample size under Linear parameterization. GAM feature selection
is used to recover the DAGs from the causal orders. We fix $d=10$
and vary the sample size. The proposed \textbf{HOST} method is compared
against DiffAN \cite{Sanchez_etal_22Diffusion}, EqVar \cite{Chen_etal_19Causal},
NPVar \cite{Gao_etal_20Polynomial}, and VarSort \cite{Reisach_etall_21Beware}. Column: Graph
type. Shaded areas are 95\% confidence intervals over five independent
runs. LCIT~\cite{Duong_Nguyen_22Conditional} is used to recover
the DAGs from the causal orders.\label{fig:samplesize-linear-gam}}
\par\end{centering}
\end{figure*}

\begin{figure*}[t]
\centering{}\includegraphics[width=1\textwidth]{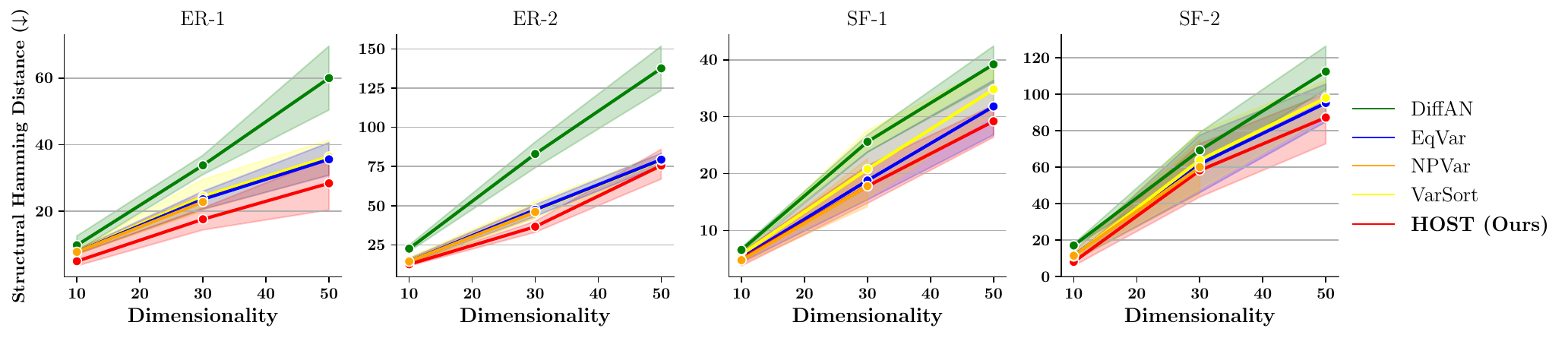}\caption{Causal structure learning performance on synthetic data as function
of Dimensionality under Linear parameterization. GAM feature selection
is used to recover the DAGs from the causal orders. We fix $n=500$
and vary the dimensionality. The proposed \textbf{HOST} method is
compared against DiffAN \cite{Sanchez_etal_22Diffusion}, EqVar \cite{Chen_etal_19Causal},
NPVar \cite{Gao_etal_20Polynomial}, and VarSort \cite{Reisach_etall_21Beware}. Column: Graph
type. Shaded areas are 95\% confidence intervals over five independent
runs. \label{fig:dimensionality-linear-gam}}
\end{figure*}

\begin{figure*}[t]
\centering{}%
\begin{tabular}{cc}
\includegraphics[width=0.5\columnwidth]{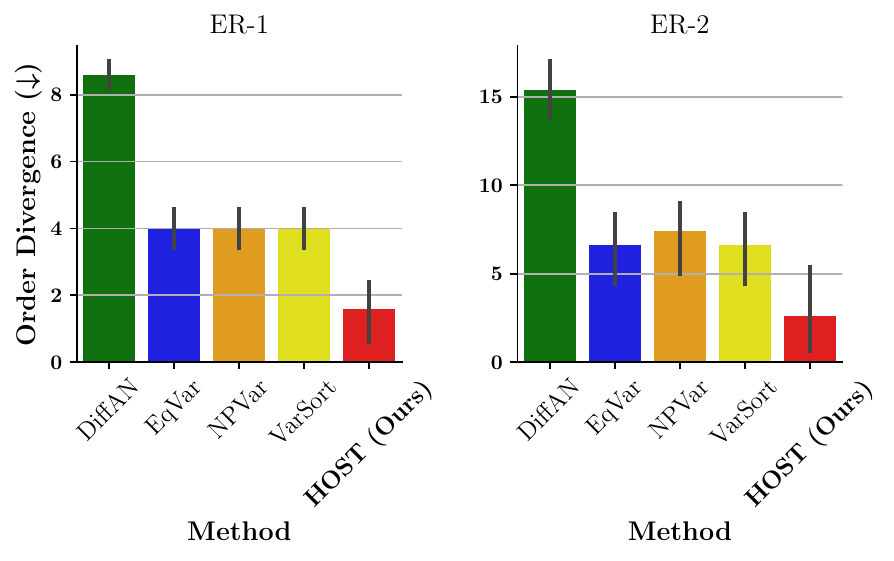} & \includegraphics[width=0.5\columnwidth]{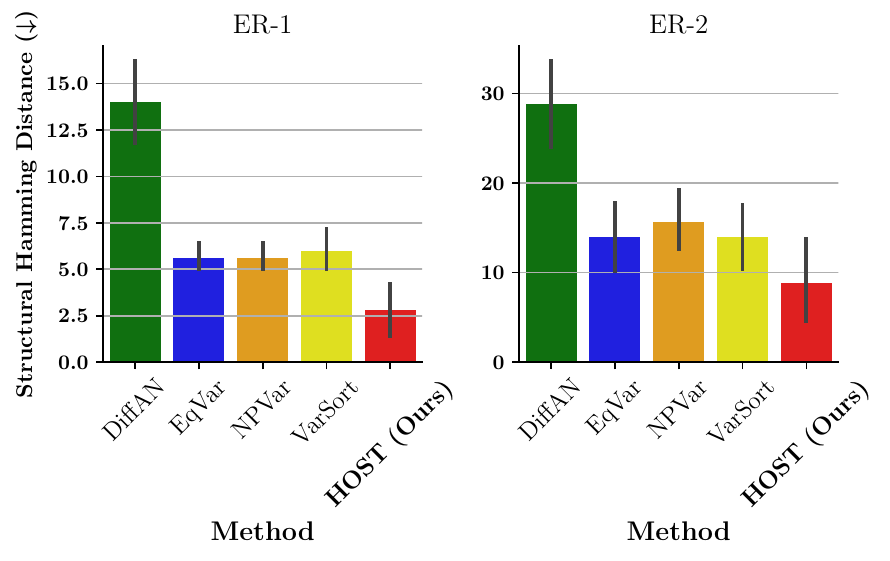}\tabularnewline
(a) Causal Ordering & (b) Causal Structure Learning\tabularnewline
\end{tabular}\caption{Causal structure learning performance on synthetic data under Nonlinear
parameterization. LCIT~\cite{Duong_Nguyen_22Conditional} is used
to recover the DAGs from the causal orders. We fix $N=5000$ and $d=10$.
The proposed \textbf{HOST} method is compared against DiffAN \cite{Sanchez_etal_22Diffusion},
EqVar \cite{Chen_etal_19Causal}, NPVar \cite{Gao_etal_20Polynomial},
and VarSort \cite{Reisach_etall_21Beware}.
Column: Graph type. Error bars are 95\% confidence intervals over
five independent runs.\label{fig:nonlinear}}
\end{figure*}

\begin{figure}[t]
\begin{centering}
\includegraphics[width=0.7\textwidth]{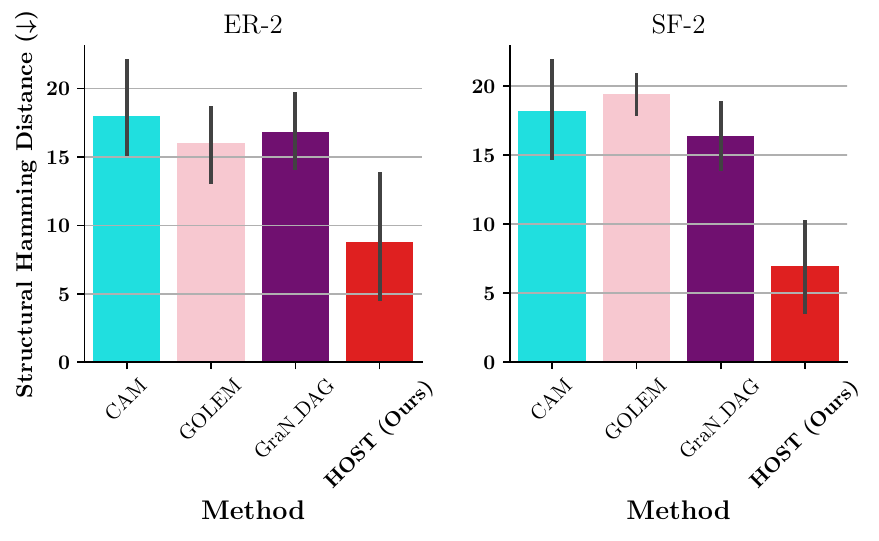}
\end{centering}
\centering{}\caption{Causal structure learning performance on synthetic data as function
of Dimensionality under Linear parameterization in comparision with non-polynomial time methods. We fix $n=500$ and $d=10$. The proposed \textbf{HOST} method is compared
against CAM \cite{Buhlmann_etall_14Cam}, GOLEM \cite{Ng_etal_2020Role}, and GraN-DAG \cite{Lachapelle_etal_2020Gradient}. Column: Graph
type. Error bars are 95\% confidence intervals over five independent
runs.\label{fig:non-polynomial-time-methods}}
\end{figure}

\end{document}